\renewcommand\footnotetextcopyrightpermission[1]{} 
\newcommand{\EA}{$(\mu + 1)$-EA\xspace}
\newcommand{\argmax}{\text{arg\,max}}
\newcommand{\argmin}{\text{arg\,min}}
\newcommand{\OPT}{\text{OPT}}
\newcommand{\ones}[1]{|#1|_1}
\newcommand{\Prob}[1]{\mathrm{Pr}\left(#1\right)}
\newcommand{\prob}[1]{\Prob{#1}}
\newcommand{\E}[1]{\text{E}\left(#1\right)}
\newcommand{\ONEMAX}{\textsc{OneMax}\xspace}
\newcommand{\QD}{QD\xspace}
\newcommand{\QDk}{QD({\ensuremath{k}})\xspace}
\newcommand{\noo}[1]{\ensuremath{#1}\nobreakdash-NoO}
\newcommand{\mplEA}[2]{$(#1+#2)$-EA\xspace} 
\newcommand{\opoEA}[1]{\mplEA{1}{1}} 
\newcommand{\Wlog}{W.\,l.\,o.\,g.\xspace}
\newcommand{\hide}[1]{}
\definecolor{jakobcolor}{RGB}{138,43,226}
\definecolor{dirkcolor}{RGB}{60,179,113}
\definecolor{todocolor}{rgb}{0.9,0.1,0.1}
\definecolor{changedcolor}{rgb}{0.42,0.27,0.57}
\definecolor{addedcolor}{rgb}{0.867,0.176,0.361}
\newcommand{\nbc}[3]{
		{\colorbox{#3}{\bfseries\sffamily\scriptsize\textcolor{white}{#1}}}
		{\textcolor{#3}{\sf\small$\blacktriangleright$\textit{#2}$\blacktriangleleft$}}
}
\newcommand{\jakob}[1]{\nbc{Jakob}{#1}{jakobcolor}}
\newcommand{\dirk}[1]{\nbc{Dirk}{#1}{dirkcolor}}
\newcommand{\todo}[1]{\nbc{TODO}{#1}{todocolor}}
\newcommand{\changed}[1]{\nbc{CHANGED}{#1}{changedcolor}}
\newcommand{\added}[1]{\nbc{ADDED}{#1}{addedcolor}}
\newcommand{\redacted}[1]{\emph{[anonymized for review]}}
 \renewcommand{\jakob}[1]{}
 \renewcommand{\dirk}[1]{}
\renewcommand{\changed}[1]{#1}
\renewcommand{\added}[1]{#1}
\renewcommand{\todo}[1]{}
  \providecommand\BibTeX{{%
    \normalfont B\kern-0.5em{\scshape i\kern-0.25em b}\kern-0.8em\TeX}}}
\begin{document}


\title[Runtime Analysis of Quality Diversity Algorithms]{Runtime Analysis of Quality Diversity Algorithms}


\author{Jakob Bossek}
\orcid{0000-0002-4121-4668}
\affiliation{%
  \institution{
RWTH Aachen University}
  \city{Aachen}
  \country{Germany}
}

\author{Dirk Sudholt}
\orcid{0000-0001-6020-1646}
\affiliation{%
  \institution{
University of Passau}
  \city{Passau}
  \country{Germany}
}

\renewcommand{\shortauthors}{Bossek and Sudholt}

\begin{abstract}
Quality diversity~(QD) is a branch of evolutionary computation that gained increasing interest in recent years.
The Map-Elites QD approach defines a feature space, i.e., a partition of the search space, and stores the best solution for each cell of this space. We study a simple QD algorithm in the context of pseudo-Boolean optimisation on the ``number of ones'' feature space, where the $i$th cell stores the best solution amongst those with a number of ones in $[(i-1)k, ik-1]$. Here $k$ is a granularity parameter \changed{$1 \leq k \leq n+1$}. We give a tight bound on the expected time until all cells are covered for arbitrary fitness functions and for all $k$ and analyse the expected optimisation time of QD on \textsc{OneMax} and other problems whose structure aligns favourably with the feature space. On combinatorial problems we show that QD finds a ${(1-1/e)}$-approximation when maximising any monotone sub-modular function with a single uniform cardinality constraint efficiently. Defining the feature space as the number of connected components of a connected graph, we show that QD finds a minimum spanning tree in expected polynomial time.
\end{abstract}

\begin{CCSXML}
<ccs2012>
<concept>
<concept_id>10003752.10003809.10003716.10011136.10011797.10011799</concept_id>
<concept_desc>Theory of computation~Evolutionary algorithms</concept_desc>
<concept_significance>500</concept_significance>
</concept>
</ccs2012>
\end{CCSXML}

\ccsdesc[500]{Theory of computation~Evolutionary algorithms}

\keywords{Quality diversity, runtime analysis}

\maketitle

\section{Introduction}
\label{sec:introduction}

Evolutionary algorithms~(EAs) are randomised search heuristics imitating principles of natural evolution of populations. EAs have proven to perform excellently in many domains, including -- but not limited to -- multi-objective optimisation, logistics, and transportation~\cite{deb_2012_optimisation}.
The classic focus of EAs is on pure (global) optimisation, i.e., to find a single high-performing solution to the problem at hand. In multi-modal optimisation~\cite{preuss_multimodal_2015} instead, the goal is in finding all or (at least) most of multiple local and/or global optima. Such algorithms usually use different means of explicit diversity preservation to allow a more exploratory optimisation process in order to prevent the population from converging into local optima which often act as traps.
Another related EA-branch first introduced by Ulrich~\&~Thiele~\cite{ulrich_maximizing_2011} in the continuous domain is evolutionary diversity optimisation~(EDO). In EDO the aim is to evolve a population of diverse solutions which are all high-performers (i.e., they all satisfy a minimum quality threshold which may be set a-priori or can be adapted dynamically). Usually, diversity is measured by a domain- or even problem-specific diversity function. Recently, EDO gained increasing interest in  combinatorial optimisation.
Very recently, a new branch of evolutionary computation termed quality diversity~(QD) emerged~\cite{chatzilygeroudis_quality-diversity_2021}. In the so-called \emph{MAP-Elites} approach~\cite{mouret_illuminating_2015}, the search space is partitioned into disjoint regions of interest termed \emph{cells}. The goal is to find high-quality solutions for each cell, making diversity explicit rather than implicit like it is done, e.g., in EDO. QD algorithms have major applications in the field of robotics~\cite{cully_evolving_2016} and only recently have been successfully applied  in, e.g., TSP instance space coverage~\cite{BosNeu22} or combinatorial optimisation for the knapsack problem~\cite{nikfarjam_use_2022} and the travelling thief problem~\cite{nikfarjam_analysis_2022}.

Stemming from engineering applications, the theory of evolutionary algorithms was mainly neglected in the early years. With some delay it finally developed into a mature field with many runtime results on pseudo-Boolean optimisation, many combinatorial optimisation problems~\cite{neumann2010bioinspired}, multi-modal optimisation~\cite{rajabi_stagnation_2021}, diversity preservation~\cite{friedrich_analysis_2009} and -- rather recently -- advances in multi-objective optimisation~\cite{zheng_first_2022} and EDO~\cite{do_analysis_2021}. However, at the moment of writing -- to the best of our knowledge -- there is just one work on the runtime analysis of quality diversity algorithms. Nikfarjam et al.~\cite{nikfarjam_analysis_2022} analyse a QD algorithm for the knapsack problem. They show that the studied QD algorithm operating on a suitable two-dimensional feature space mimics the dynamic programming approach for the knapsack problem; they show upper bounds on the expected optimisation time.
Apart from this, there is no theoretical work explaining the potential benefits of QD. Many questions need to be solved: when does QD perform well, and why? How does its performance compare with that of established algorithms?  How to design effective QD algorithms for interesting problem classes? Many of these questions can be resolved by runtime analysis.

We perform time complexity analysis of a simple MAP-Elites quality diversity algorithm, termed QD, in the context of pseudo-Boolean optimisation and combinatorial optimisation. QD operates on the $k$-number-of-ones~($k$-NoO) feature space where $k \in \{1, \ldots, n+1\}$ steers the granularity of the feature space, i.e., the number of cells. The $i$th cell of the map for $i = 1, \ldots, (n+1)/k=:L$ describes the niche of solutions with the number of ones in the interval $[(i-1)k,ik-1]$.

We consider three performance measures: the time to find an optimal search point (\emph{optimisation time}), the time to cover all cells (\emph{cover time}) and the time to find an optimal search point within each cell (\emph{optimal-cover time}).
We show that the cover time of QD operating on the $k$-NoO feature space on every pseudo-Boolean fitness function is $O(n^2 \log n)$ for $k=1$ and $O(n/(\sqrt{k}4^k p_m^k))$ for larger~$k$, where $p_m = \Theta(1/n)$ is the mutation probability. For granularity~$k=1$ this immediately gives upper bounds of $O(n^2 \log n)$ for all functions of unitation (where the fitness only depends on the number of ones) and all monotone functions~\cite{Doerr2012c,LenglerS18}, demonstrating that \QD can be powerful if the feature space is well aligned with relevant problem characteristics.
We show that the aforementioned bounds are tight for \ONEMAX and that the optimal-cover time on \ONEMAX is of the same order as well.

We further study QD on the $k$-NoO feature space in the domain of monotone submodular function maximisation which is known to be NP-hard in general. QD finds a $(1-1/e)$-approximation on any monotone submodular function with a single cardinality constraint $r$ in expected time $O(n^2(\log(n) + r))$, matching the upper bound shown by Friedrich~\&~Neumann~\cite{FN_2015_submodular_maximisation} for the multi-objective optimiser GSEMO. Both GSEMO and our QD operate similarly, but QD is simpler and more straight-forward from an algorithmic perspective. This application indicates an interesting relationship between the definition of a feature space in the context of quality diversity optimisation and the fitness-function construction -- encoding features of the encoded solutions as additional objectives -- in multi-objective optimisation.
Last but not least, we focus on the minimum spanning tree~(MST) problem on an edge-weighted, connected graph with $n$ nodes and $m$ edges. We define the feature space as the number of connected components~(CCs) of a connected graph encoded via a bit-string over $\{0,1\}^m$, i.e., the feature space spans a feature of the phenotype of the encoded solutions. QD operating on this CC feature space finds a minimum spanning tree of any connected edge-weighted source graph in at most $O(n^2m)$ steps, provided all edge weights are bounded by $2^{O(n)}$. Again, the algorithm works similarly to the bi-objective formulation studied by Neumann~\&~Wegener~\cite{Neumann2006c} who used the number of connected components and the total weight of the encoded solution as two conflicting objectives to be minimised simultaneously. They showed a bound of $O(nm(\log(n) + \log(w_{\max})))$ where $w_{\max}$ is the highest edge weight in the graph. QD is a more simple algorithm since it does not require the concept of Pareto-dominance to deal with partial ordering of the bi-objective fitness vectors, and the runtime analysis is more straight-forward as a consequence.


\section{The studied QD-algorithm}
\label{sec:algorithm}

\begin{algorithm}[ht]
\caption{QD algorithm}\label{alg:QD}
Initialise empty map $M$\;
Choose $x \in \{0,1\}^n$ uniformly at random\;
Store $x$ in map in cell $M(x)$\;
\While{termination condition not met}{
  Select parent $x \in M$ uniformly at random\;
  Generate $y$ from $x$ by bit-wise independent mutation with probability $p_m$\;
  \eIf{$M(y) \neq \emptyset{}$}{
    $M(y) = y$\;
  }{
    $z = M(y)$\;
    \If{$f(y) \geq f(z)$}{
      $M(y) = y$\;
    } 
  } 
} 
\end{algorithm}

The outline of the studied quality diversity algorithm \QD analysed is given in Algorithm~\ref{alg:QD}. The algorithm follows the \emph{Map-Elites} scheme and operates on functions $f \colon \{0,1\}^n \to \mathbb{R}$. First, QD initialises a \emph{map} $M$ which over the course of optimisation stores one solution for each expression of the feature space. The specific details will be discussed later; for now assume that $M$ is a function that maps $\{0,1\}^n$ to a feature space. In the beginning the map is empty. Next, an initial solution $x \in \{0,1\}^n$ is sampled uniformly at random. This solution is stored alongside its objective function value in the map in cell $M(x)$. After initialisation the evolutionary loop starts until a stopping condition is met.
In each iteration a random parent $x$ is sampled uniformly at random from the set of already covered map cells. A single offspring $y$ is generated from $x$ by bit-wise mutation with mutation probability $p_m$. If the cell at position $M(y)$ is not yet covered, $y$ is stored in any case. If $M(y)$ is already populated, say with $z$, the fitness function decides which one to keep. If $f(y)$ is no worse than $f(z)$, $y$ replaces $z$ in the respective map cell.

We define the following goals and corresponding hitting times.
\begin{itemize}
  \item \textbf{Optimisation}: let $T_{f, \text{OPT}}$ be the random number of function evaluations until QD finds a global optimum of~$f$.
  \item \textbf{Coverage}: let $T_{f, \text{C}}$ be the random number of function evaluations until QD finds each one solution for each map cell.
  \item \textbf{Optimal coverage:} Let $T_{f, \text{COPT}}$ be the time until an optimal solution is found for each map cell.
\end{itemize}
Note that the optimal-coverage time is the largest of all measures: $T_{f, \text{C}} \leq T_{f, \text{COPT}}$ and $T_{f, \text{OPT}} \leq T_{f, \text{COPT}}$.

\section{Feature space}
\label{sec:feature_space}

We will consider two feature spaces. The first is based on the number of ones. A second feature space for minimum spanning trees will be introduced later on, in Section~\ref{sec:runtime_mst}.
\begin{definition}
The \emph{number-of-ones} (NoO) feature space is defined as follows. Given a granularity parameter $k \in \{1, \dots, n+1\}$ with $k$ dividing $n+1$, the map $M$ is of size $L \coloneqq \frac{n+1}{k}$ and the $i$-th cell, $i \in \{1, \dots, L\}$, stores the best solution found so far amongst all solutions with a number of ones in $[(i-1)k, ik-1]$. We refer to this as the \noo{k} feature space.
%
\end{definition}
Note that $k=1$ matches the somehow natural feature space storing a best search point for every number of ones, that is, every level of the Boolean hypercube.
It should be noted that \QD can be implemented efficiently for the $k$-NoO feature space: the map can be represented via an array of length $L$. Given a solution $x \in \{0,1\}^n$, we can calculate the index $M(i)$ directly in constant time by calculating its array index $i = \lfloor|x|_1/k\rfloor$.

The number-of-ones feature space is a natural space for the Boolean hypercube and it is well-aligned with many well-studied fitness functions. Functions of unitation, e.g., \textsc{OneMax}, \textsc{Jump} and \textsc{Trap} only depend on the number of ones. These functions are defined in this way for simplicity and notational convenience. For some combinatorial optimisation problems, e.\,g.\ in selection problems like \textsc{Knapsack} or \textsc{Vertex Cover}, as well as for optimising submodular functions, the number of ones is a natural property.

\section{Pseudo-Boolean Optimisation}
\label{sec:runtime_test_functions}

\subsection{Upper Bounds for Cover Times}

We first give an upper bound on the expected time to cover all cells. This upper bound applies to every pseudo-Boolean fitness function and relies on worst-case probabilities of jumping from one cell to an undiscovered neighbouring cell.
\begin{theorem}
\label{thm:qdk_onemax_covtime_allones}
Consider \QD{} operating on the \noo{k} feature space with granularity \changed{$1 \leq k \leq n+1$}, $k$ dividing $n+1$, with an arbitrary initial search point and mutation rate $p_m = c/n$ for a constant~$c > 0$. For every fitness function, the expected cover time
is $O(n^2\log n)$ for $k=1$ and
$O\left(Lp_m^{-k}/\binom{2k-1}{k}\right) = O(n/(\sqrt{k}4^kp_m^k))$ for $k \geq 2$.
\end{theorem}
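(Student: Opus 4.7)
The plan is to track the minimum and maximum covered cell indices $a,b \in \{1,\dots,L\}$ and bound the expected time for each to reach its corresponding end of the map. I will couple the real QD with an ``exact-extension-only'' process that, among all mutations, only counts those moving the current boundary by exactly one cell; under this coupling the covered set stays contiguous, so cover time equals the time for both boundaries to reach the map ends. Since every exact extension occurring in the slow process also occurs in the real process---and the real algorithm may additionally cover cells via other mutations---the real cover time is dominated by the slow one.

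For the per-iteration progress bound, fix the representative of cell $b$ with $j=(b-1)k+s$ ones for $s\in\{0,\dots,k-1\}$. Flipping exactly $k-s$ specific zero-bits (and no one-bits) produces an offspring with $bk$ ones, i.e.\ in cell $b+1$; its probability is $\binom{n-j}{k-s}p_m^{k-s}(1-p_m)^{n-(k-s)}$. Because the factor $p_m^{k-s}$ dominates for $p_m=\Theta(1/n)$, the worst case over $s$ is $s=0$, giving probability at least $\binom{n-(b-1)k}{k}p_m^{k}(1-p_m)^n$. Using the identity $n-(b-1)k=(L-b+1)k-1$, for $b\le L-1$ this is at least $\binom{2k-1}{k}p_m^k(1-p_m)^n$, and for smaller $b$ the binomial grows. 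Multiplying by the lower bound $1/L$ on the probability of picking cell $b$'s rep as parent completes the per-iteration success bound. A symmetric analysis yields the analogous bound for downward extensions, where the worst case is $s=k-1$ and one flips $k$ specific one-bits, giving at least $(1/L)\binom{ak-1}{k}p_m^k(1-p_m)^n$ for $a\ge 2$.

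Applying a fitness-level / variable-drift argument, the expected time for the upper boundary to reach $L$ is at most $L/(p_m^k(1-p_m)^n)\cdot\sum_{d=2}^{L-b_0+1}1/\binom{dk-1}{k}$, with $d=L-b+1$. For $k=1$ the sum reduces to the harmonic series $\sum_{d=2}^{L}1/(d-1)=O(\log n)$, and with $p_m=\Theta(1/n)$ this yields $O(n^2\log n)$. For $k\ge 2$, the consecutive ratios $\binom{(d+1)k-1}{k}/\binom{dk-1}{k}$ grow at least geometrically, so the sum is dominated by its first term $1/\binom{2k-1}{k}$ and the total is $O(L/(\binom{2k-1}{k}p_m^k))$. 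Using the symmetric downward analysis and $\E{T_C}\le \E{T_{\mathrm{up}}}+\E{T_{\mathrm{down}}}$ yields the stated bound. The main obstacle will be proving the uniform-in-$k$ claim $\sum_{d\ge 2}1/\binom{dk-1}{k}=O(1/\binom{2k-1}{k})$ cleanly via the geometric-growth argument, and formalising the coupling so that full coverage---not merely reaching both boundary ends---is genuinely achieved by the analysed event sequence.
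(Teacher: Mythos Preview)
Your approach matches the paper's: track the extremal covered indices, lower-bound the per-step probability of extending by one cell via a worst-case $k$-bit flip from the boundary cell, and arrive at the sum $\frac{L}{p_m^{k}(1-p_m)^{n-k}}\sum_{d=2}^{L}1/\binom{dk-1}{k}$, treating $k=1$ (harmonic sum) and $k\ge 2$ separately. The worst-case-in-cell argument and the substitution $n-(b-1)k=(L-b+1)k-1$ are exactly what the paper does, just viewed from the upper rather than the lower boundary.

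The one substantive gap is your bound on the sum for $k\ge 2$. The proposed justification---that consecutive ratios $\binom{(d+1)k-1}{k}/\binom{dk-1}{k}$ are bounded below by a constant greater than~$1$---fails: this ratio equals $\prod_{j=0}^{k-1}\frac{(d+1)k-1-j}{dk-1-j}$, each factor of which is $1+\Theta(1/d)$, so the product tends to~$1$ as $d\to\infty$ for any fixed~$k$. Hence the terms $1/\binom{dk-1}{k}$ do not decay geometrically. The paper instead proves the polynomial-growth lemma $\binom{ik-1}{k}\ge\frac{i^2}{3}\binom{2k-1}{k}$ for $k\ge 2$ and $i\ge 3$ (obtained by writing $\binom{ik-1}{k}/\binom{2k-1}{k}=\prod_{j=1}^{k}\frac{ik-j}{2k-j}$ and keeping only the last two factors, then minimising over $k$), which gives $\sum_{i\ge 2}1/\binom{ik-1}{k}\le\frac{1}{\binom{2k-1}{k}}\bigl(1+3\sum_{i\ge 3}i^{-2}\bigr)=O\bigl(1/\binom{2k-1}{k}\bigr)$. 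This is the clean resolution of the obstacle you flagged. Your coupling worry, on the other hand, is unnecessary: since covered cells are never uncovered and your per-step lower bound uses only the universal $1/L$ parent-selection factor, the fitness-level inequality $\E{T_{j-1}}\le\E{T_j}+1/s_j$ holds regardless of which other cells happen to be covered, and summing (plus the symmetric upward chain) gives full coverage directly without any contiguity assumption.
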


Note that, while Theorem~\ref{thm:qdk_onemax_covtime_allones} bounds the cover time irrespective of the fitness, the \emph{optimal-cover} time depends on the fitness function. In particular, the cell $\lceil L/2 \rceil$ contains all $\Omega(2^n/\sqrt{n})$ search points with $\lfloor n/2 \rfloor$ ones and the fitness function in this cell could be deceptive or a needle-in-a-haystack function, implying exponential optimal cover times. Hence the \emph{optimal}-cover time cannot be bounded as in Theorem~\ref{thm:qdk_onemax_covtime_allones}.

To prove Theorem~\ref{thm:qdk_onemax_covtime_allones}, we show the following technical lemma.
\begin{lemma}
\label{lem:binomial-coefficients-ki-minus-1}
For all $k \in \mathbb{N}$, $k \ge 2$ and $i \ge 3$,
\[
    \binom{ik-1}{k} \ge \frac{i^2}{3} \cdot \binom{2k-1}{k}.
\]
\end{lemma}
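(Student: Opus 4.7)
The plan is to prove the inequality by writing the ratio $\binom{ik-1}{k} / \binom{2k-1}{k}$ as a product of $k$ simple fractions and bounding the factors appropriately. Concretely, I would use
\[
    \frac{\binom{ik-1}{k}}{\binom{2k-1}{k}} \;=\; \prod_{j=0}^{k-1} \frac{ik-1-j}{2k-1-j}.
\]
The goal is then to show that this product is at least $i^2/3$.

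My first step would be the per-factor bound: for every $j \in \{0,1,\dots,k-1\}$ and every $i \geq 2$,
\[
    \frac{ik-1-j}{2k-1-j} \;\geq\; \frac{i}{2}.
\]
This is a one-line manipulation: cross-multiplying, the inequality reduces to $i(1+j) \geq 2(1+j)$, which holds for $i \geq 2$. A naive application of this to all $k$ factors would only give $(i/2)^k$, which for the smallest case $k=2$ equals $i^2/4$ and falls just short of the target $i^2/3$. So this bound alone is not enough; I need to exploit one factor more carefully.

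The key observation is that at $j = k-1$ the factor simplifies exactly to $(ik-k)/k = i-1$, and under the hypothesis $i \geq 3$ one has $i-1 \geq 2i/3$. So the plan is to single out this factor and bound the remaining $k-1$ factors by $i/2$, yielding
\[
    \frac{\binom{ik-1}{k}}{\binom{2k-1}{k}} \;\geq\; (i-1)\cdot \left(\frac{i}{2}\right)^{k-1} \;\geq\; \frac{2i}{3}\cdot \left(\frac{i}{2}\right)^{k-1}.
\]

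Finally, I would verify that the right-hand side is at least $i^2/3$ whenever $k \geq 2$. This reduces to $(i/2)^{k-1} \geq i/2$, i.e.\ to $(i/2)^{k-2} \geq 1$, which is trivially true since $i \geq 3 \geq 2$ and $k \geq 2$. With equality essentially attained in the extremal case $k = 2$, $i = 3$ (where the argument is already tight), this closes the proof. I do not expect a real obstacle: the only subtle point is recognising that the worst case is $k=2$, which forces the refined treatment of the $j = k-1$ factor rather than a uniform bound across all~$j$.
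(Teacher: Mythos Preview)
Your proof is correct and follows essentially the same approach as the paper: write the ratio $\binom{ik-1}{k}/\binom{2k-1}{k}$ as the telescoping product $\prod_{j=0}^{k-1}\frac{ik-1-j}{2k-1-j}$ and bound the factors. The only difference is cosmetic: the paper lower-bounds all but the last two factors by~$1$ and then evaluates those two at the worst case $k=2$, whereas you use the uniform per-factor bound $i/2$ together with the exact value $i-1$ of the last factor; your variant is slightly tighter for large~$k$, but both land on $i^2/3$ in the extremal case $k=2$, $i=3$.
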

\begin{proof}
\begin{align*}
\frac{\binom{ik-1}{k}}{\binom{2k-1}{k}} =\;& \frac{(ik-1)!(k-1)!}{(ik-1-k)!(2k-1)!}\\
=\;& \frac{ik-1}{2k-1} \cdot \frac{ik-2}{2k-2} \cdot \ldots \cdot \frac{ik-k+1}{k+1} \cdot \frac{ik-k}{k}\\
\ge\;& \frac{ik-k+1}{k+1} \cdot \frac{ik-k}{k}
= \frac{ik-k+1}{k+1} \cdot (i-1).
\end{align*}
Since the fraction is non-decreasing with~$k$, its minimum is attained for $k=2$ and we obtain the lower bound
\begin{align*}
\frac{2i-2+1}{2+1} \cdot (i-1) = \frac{(2i-1)(i-1)}{3} = \frac{2i^2 - 3i + 1}{3}.
\end{align*}
Using $i \ge 3$, which implies $i^2 \ge 3i$,
we bound the numerator from below by $2i^2 - 3i + 1 \ge i^2 + 1 \ge i^2$. This completes the proof.
\end{proof}

\begin{proof}[Proof of Theorem~\ref{thm:qdk_onemax_covtime_allones}]
\added{We assume $k \le (n+1)/2$ as otherwise the only $k$ dividing $n+1$ is $k=n+1$, where we only have one cell and the cover time is trivial.}
After initialisation one cell $i$ is covered and $\QDk$ will maintain a solution $x^{(i)}$ whose number ones is in $[k(i-1), ki-1]$. Note that the precise search point can be replaced by another search point having the same property and a fitness value that is no worse.
If $i > 1$ then cell $i-1$ can be covered by choosing the current search point $x^{(i)}$ in cell~$i$ as a parent and creating an offspring whose number of ones is in $[k(i-2), k(i-1)]$. In the worst case\footnote{This worst case is likely to occur frequently on the function \ONEMAX, when trying to reach cells of smaller index, and on the function $-$\textsc{TwoMax} with $\textsc{TwoMax}(x) := \max\{\sum_{i=1}^n x_i, \sum_{i=1}^n (1-x_i)\}$.}, $\ones{x^{(i)}} = ki-1$ and then selecting $x^{(i)}$ as a parent and flipping exactly $k$ one-bits in $x^{(i)}$ will create an offspring in cell~$i-1$. Since the probability of selecting a parent from one out of at most $L$ covered cells is always at least $1/L$, the probability of this event is at least
\begin{align*}
    s_i
    & \geq \frac{1}{L} \cdot \binom{ki-1}{k} \cdot p_m^k (1-p_m)^{n-k}.
\end{align*}
The expected time for covering all cells with an index of at most~$i$ is bounded by $\sum_{j=2}^{i-1} \frac{1}{s_i} \le \sum_{j=2}^{L} \frac{1}{s_i}$. The same arguments apply symmetrically for reaching cells with a larger number of ones, assuming that in the worst case a search point in cell~$j$ has $k(j-1)$ ones and swapping the roles of zeros and ones.
Thus, the expected time to cover all cells is bounded by
\begin{align*}
    2\sum_{i=2}^{L} \frac{1}{s_i}
    & \leq \frac{2L}{p_m^{k}(1-p_m)^{n-k}} \cdot \sum_{i=2}^{L} \frac{1}{\binom{ki-1}{k}}.
\end{align*}
Now for the summation in the last line we make a case distinction.
For $k=1$ the summation simplifies to
\[
    \sum_{i=2}^L \frac{1}{\binom{i-1}{1}} = \sum_{i=2}^L \frac{1}{i-1} = \sum_{i=1}^{L-1} \frac{1}{i} = H(n),
\]
where $H(n)$ denotes the $n$-th harmonic number, as $k=1$ implies $L-1= \frac{n+1}{k} -1 = n$. Since $H(n) = \ln(n) + \Theta(1)$, along with $p_m = \Theta(1/n)$ and $(1-p_m)^{n-k} = (1-c/n)^{n-k} = \Theta(1)$, this proves the claimed upper bound for $k=1$.

For $k \geq 2$, applying Lemma~\ref{lem:binomial-coefficients-ki-minus-1} to all summands with \changed{$i \ge 3$},
\begin{align*}
    \sum_{i=2}^{L} \frac{1}{\binom{ki-1}{k}} \le\;& \frac{1}{\binom{2k-1}{k}} \cdot \left(1 +  3 \sum_{i=3}^L \frac{1}{i^2}\right) = \frac{O(1)}{\binom{2k-1}{k}}
\end{align*}
since $\sum_{i=3}^L \frac{1}{i^2} \le \sum_{i=1}^\infty \frac{1}{i^2} = \frac{\pi^2}{6} = O(1)$.
Hence, along with $(1-p_m)^{n-k} = (1-c/n)^{n-k} = \Theta(1)$, the expected coverage time is $O\left(Lp_m^{-k}/\binom{2k-1}{k}\right)$. Using well known results on the largest binomial coefficient (see, e.\,g.,~\cite[Equation (1.4.18)\added{, Corollary~1.4.12}]{DoerrProbabilityChapter2020}), $\binom{2k-1}{k} = 2^{2k-1}/\Theta(\sqrt{k}) = \Theta(4^k/\sqrt{k})$, we obtain the simplified bound $O(Lp_m^{-k} \cdot \sqrt{k}/4^k) = O(n/(\sqrt{k}4^kp_m^k))$.
\end{proof}

\subsection{Functions of Unitation \& Monotone Functions}

Our general Theorem~\ref{thm:qdk_onemax_covtime_allones} guarantees good performance for \QD if the feature space aligns favourably with the problem in hand. We give two such examples. Functions of unitation only depend on the number of ones in the bit string. Well-known examples include \ONEMAX$(x) \coloneqq \sum_{i=1}^n x_i$, \textsc{Jump}~\cite{Droste2002}, \textsc{Cliff}~\cite{JagerskupperStorch2007}, \textsc{Hurdle}~\cite{PRUGELBENNETT2004135} and \textsc{Trap}, that is, functions of wildly varying difficulty for evolutionary algorithms.
\changed{For functions of unitation and $k=1$ with the \noo{1} feature space, covering all cells is equivalent to covering all cells optimally, and to sampling a global optimum.} Hence Theorem~\ref{thm:qdk_onemax_covtime_allones} yields the following simple implication.
\begin{corollary}
\label{cor:unitation}
The expected cover time, the expected optimal-cover time, and the expected time until \QD{} operating on the \noo{1} feature space, with mutation rate $p_m = c/n$, $c > 0$ constant, finds a global optimum on any function of unitation is $O(n^2 \log n)$.
\end{corollary}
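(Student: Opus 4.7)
The plan is to derive Corollary~\ref{cor:unitation} as an almost immediate consequence of Theorem~\ref{thm:qdk_onemax_covtime_allones} by showing that, for $k=1$ and functions of unitation, the three performance measures coincide (up to inequalities in the right direction). First I would note that with granularity $k=1$ the \noo{1} map has exactly $L = n+1$ cells, and the $i$th cell contains precisely those search points with $i-1$ ones. Hence each cell is a single level of the Boolean hypercube.

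The key structural observation is that for any function of unitation $f$, the fitness depends only on $|x|_1$, so every two search points in the same cell have the same $f$-value. This means any occupant of a cell is automatically an \emph{optimal} occupant of that cell. Therefore the events ``cell $i$ is covered'' and ``cell $i$ is covered optimally'' are identical, so the cover time and the optimal-cover time coincide deterministically, i.e.\ $T_{f,\mathrm{C}} = T_{f,\mathrm{COPT}}$.

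Next I would relate these to the optimisation time. A global optimum of~$f$ lies in some fixed cell $i^* \in \{1,\dots,L\}$ (the one corresponding to $|x|_1 = i^*-1$). Once cell $i^*$ is covered, its occupant has the optimal number of ones and therefore the optimal $f$-value, so a global optimum has been sampled. Thus $T_{f,\mathrm{OPT}} \le T_{f,\mathrm{C}}$. Combined with the general inequalities $T_{f,\mathrm{C}} \le T_{f,\mathrm{COPT}}$ and $T_{f,\mathrm{OPT}} \le T_{f,\mathrm{COPT}}$ already stated in Section~\ref{sec:algorithm}, all three quantities satisfy $T_{f,\mathrm{OPT}} \le T_{f,\mathrm{C}} = T_{f,\mathrm{COPT}}$.

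Finally I would invoke Theorem~\ref{thm:qdk_onemax_covtime_allones} for $k=1$, which yields $\E{T_{f,\mathrm{C}}} = O(n^2 \log n)$ for \emph{every} pseudo-Boolean fitness function, and in particular for every function of unitation. By the chain of inequalities above, the same bound carries over to $\E{T_{f,\mathrm{COPT}}}$ and $\E{T_{f,\mathrm{OPT}}}$, completing the proof. There is no real obstacle here; the only thing to be careful about is making explicit that on a function of unitation the cell-wise fitness is constant, which is what collapses cover time and optimal-cover time and lets the general cover-time bound do all the work.
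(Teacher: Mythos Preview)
Your proposal is correct and matches the paper's own reasoning: the paper simply notes that for functions of unitation and $k=1$, covering all cells is equivalent to covering all cells optimally and to having sampled a global optimum, and then invokes Theorem~\ref{thm:qdk_onemax_covtime_allones}. Your write-up is in fact slightly more careful in distinguishing $T_{f,\mathrm{OPT}} \le T_{f,\mathrm{C}}$ from equality, but the argument is the same.
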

Note that for larger~$k$, the \emph{optimal}-cover time on functions of unitation can be larger than the cover time. If in cell~1 the all-zeros string is optimal and search points with $k-1$ ones have the second-best fitness of that cell, a mutation flipping $k-1$ specific bits is required to optimally cover this cell. Along with a factor of $\Theta(L)$ for selecting a parent from cell~1, this yields an expected time of $\Omega(L p_m^{-k+1})$ from a typical population covering cell~1 and $\Omega(L)$ other cells. If $k$ is large enough such that $4^k/\sqrt{k} = \omega(1/p_m)$, this is larger than the upper bound from Theorem~\ref{thm:qdk_onemax_covtime_allones}.

It should be pointed out that Corollary~\ref{cor:unitation} only holds since the feature space aligns with the definition of unitation functions. It is not robust in a sense that changing the encoding of unitation functions by swapping the meaning of zeros and ones for selected bits will immediately invalidate the proof.

A second example is the class of \emph{monotone} functions~\cite{Doerr2012c,LenglerS18,Lengler2020,LenglerZ21,KaufmannLLZ22}. A pseudo-Boolean function is called \emph{monotone} if flipping only 0-bits to~1 and not flipping any 1-bits strictly increases the fitness. This implies that $1^n$ is a unique global optimum. The class of monotone functions includes all linear functions with positive weights. All monotone functions can be solved by randomised local search in expected time $O(n \log n)$. But, surprisingly, there are monotone functions such that the (1+1)~EA with mutation rate $c/n$, $c > 0$ a sufficiently large constant, require exponential time with high probability~\cite{Doerr2012c,LenglerS18}. The reason is that mutation probabilities larger than $2.2/n$ frequently lead to mutations flipping both zeros and ones (thus avoiding the requirements of monotonicity) and accepting search points with a smaller number of ones. Since \QD{} on the \noo{1} feature space stores every increase in the number of ones, it runs in expected time $O(n^2 \log n)$ for \emph{every} mutation rate $c/n$.
\begin{corollary}
\label{cor:monotone-functions}
The expected time until \QD{} operating on the \noo{1} feature space, with mutation rate $p_m = c/n$, $c > 0$ constant, finds a global optimum on any monotone function is $O(n^2 \log n)$.
\end{corollary}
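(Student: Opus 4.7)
The plan is to reduce Corollary~\ref{cor:monotone-functions} directly to Theorem~\ref{thm:qdk_onemax_covtime_allones}. The key observation is that with granularity $k=1$, the $i$-th cell of the \noo{1} feature space stores a search point with exactly $i-1$ ones; in particular, the last cell $L = n+1$ is covered if and only if the all-ones string $1^n$ has been sampled.

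Next, I would invoke the defining property of monotone functions---flipping any 0-bit to~$1$ strictly increases the fitness---to conclude by a straightforward induction along any bit-flipping path that $1^n$ is the unique global optimum of $f$. Combined with the previous observation, this means that cell~$L$ being covered is equivalent to $1^n$ having been evaluated, and hence $T_{f,\text{OPT}} \le T_{f,\text{C}}$ holds pointwise on every run of QD. Taking expectations and applying Theorem~\ref{thm:qdk_onemax_covtime_allones} with $k=1$ then delivers the claimed $O(n^2 \log n)$ bound.

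I do not foresee any real obstacle: the corollary is essentially an observation about where the monotone optimum sits in the feature space. What makes the statement nontrivial conceptually is the contrast with the (1+1)~EA, whose runtime on monotone functions with mutation rate above $2.2/n$ can be exponential. The crux is that the cover-time bound of Theorem~\ref{thm:qdk_onemax_covtime_allones} is entirely fitness-oblivious and holds for every constant $c > 0$. A newly discovered hypercube level is stored unconditionally in its cell, so no adversarial fitness structure can push QD back to lower levels once it has reached them; this is precisely the mechanism that lets QD sidestep the hardness of monotone functions for the (1+1)~EA.
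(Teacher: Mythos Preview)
Your proposal is correct and follows exactly the paper's own reasoning: monotonicity forces $1^n$ to be the unique optimum, so covering cell $n+1$ coincides with finding the optimum, and Theorem~\ref{thm:qdk_onemax_covtime_allones} with $k=1$ immediately yields the $O(n^2\log n)$ bound. The paper presents this as an implicit corollary of the preceding discussion rather than giving a separate proof, but the content is identical to what you wrote.
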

Even though every mutation that that flips only a 0-bit strictly increases the fitness, this property is not needed for Corollary~\ref{cor:monotone-functions} as empty cells are being filled regardless of fitness. But it suggests that the result on monotone functions may be more robust to fitness transformations than our result on functions of unitation.

\subsection{Tight Bounds for OneMax}

We also show that the upper bound from Theorem~\ref{thm:qdk_onemax_covtime_allones} is asymptotically tight for \ONEMAX, for all values of~$k$. For $k \ge 2$ the reason is that, on \ONEMAX, when cell~2 is reached, the best solution in cell~2 is one with $2k-1$ ones. The best chance of reaching cell~1 from there is to pick $M(2)$ as a parent and to flip $k$ out of $2k-1$ 1-bits.
This shows that even for the simplest possible fitness function (with a unique optimum), the expected cover time grows exponentially with the granularity~$k$ (except for the trivial setting of $k=n+1$, that is, having one cell only, for which \QD equals the (1+1)~EA and the expected optimal-cover time is $O(n \log n)$).
\begin{theorem}
\label{thm:lower-bound-cover-time-QDk}
Consider \QD operating on the \noo{k}  feature space, with $1 \leq k \leq (n+1)/2$ and mutation rate $p_m = c/n$, $c > 0$ constant, on \ONEMAX.
The expected cover time and the expected optimal-cover time are both in $\Theta(n^2\log n)$ for $k=1$ and
$\Theta\left(Lp_m^{-k}/\binom{2k-1}{k}\right) = \Theta(n/(\sqrt{k}4^kp_m^k))$ for $k \geq 2$.
\end{theorem}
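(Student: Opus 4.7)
My plan is to derive the upper bounds from Theorem~\ref{thm:qdk_onemax_covtime_allones}: the cover-time upper bound transfers directly, and for $k \ge 2$ on \ONEMAX the additional time to raise each cell from its first-covering value to its optimum of $ki-1$ ones is $O(n^2 \log n)$, which is dominated; for $k=1$ cover and optimal-cover times coincide by Corollary~\ref{cor:unitation}. The matching lower bound on the cover time (which also transfers to the optimal-cover time since the latter is never smaller) will be the heart of the argument. First I would observe via a Chernoff bound that the random initial point has $\Omega(n)$ ones with probability $1-o(1)$, so cell~$1$ (points with fewer than $k$ ones) is uncovered after initialisation and must be reached later.

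For $k=1$ I would track $X_t$, the minimum number of ones across all stored points at time $t$. On \ONEMAX with $k=1$ the cell containing $X_t$ ones stores a unique point with exactly $X_t$ ones, so the only way to decrease $X_t$ is to select this parent (probability $1/L$) and produce an offspring with fewer ones; the dominant event flips exactly one $1$-bit, contributing $\Theta(X_t/n)$, while larger jumps and mutations of parents in higher cells contribute strictly less. This gives
\[
 E[X_t - X_{t+1} \mid X_t = m] = O(m/n^2),
\]
and the multiplicative drift theorem for lower bounds (Witt's variant) applied with $X_0 = \Omega(n)$ and target $1$ would yield the $\Omega(n^2 \log n)$ hitting time.

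For $k \ge 2$ the bottleneck is the last descent, from cell~$2$ to cell~$1$. I would first argue that, with probability $1-O(1/n)$, cell~$2$ is first covered by a point with $2k-1$ ones: the dominant source is an (already equilibrated) cell~$3$ holding $3k-1$ ones whose mutation flips exactly $k$ one-bits, while any variant producing fewer ones in cell~$2$ requires at least one additional bit flip and is $O(p_m)$ times rarer. Since \ONEMAX accepts only weak improvements, cell~$2$ then retains its $2k-1$ ones. Conditional on this, the per-generation probability of populating cell~$1$ is dominated by selecting cell~$2$ (probability $1/L$) and flipping exactly $k$ of its $2k-1$ one-bits, that is, $(1+o(1))\binom{2k-1}{k}p_m^k/L$; contributions from cells of index $\ge 3$ require flipping at least $2k$ one-bits and are $O(p_m^k)$ times smaller. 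This yields an expected descent time of $\Omega(L/(\binom{2k-1}{k}p_m^k)) = \Omega(n/(\sqrt{k}\,4^k\,p_m^k))$, while the complementary $O(1/n)$ event contributes at most $O(1)$.

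The hardest step will be proving that cell~$2$ really holds $2k-1$ ones throughout the descent window. A naive bound permitting cell~$2$ to contain as few as $k$ ones would admit per-generation descent probabilities of order $p_m$ rather than $p_m^k$, completely destroying the lower bound. I plan to handle this via a stopping-time analysis or coupling showing that the transient during which cell~$2$ holds fewer than $2k-1$ ones is both rare in probability and, whenever it occurs, short relative to the $\Omega(n^{k+1}/(\sqrt{k}\,4^k))$ descent time-scale, so its contribution is only $o(1)$ of the target lower bound.
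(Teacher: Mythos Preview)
Your upper-bound sketch and the $k=1$ drift idea are essentially the paper's approach. The substantive gap is in your lower-bound argument for both regimes: you repeatedly write the parent-selection probability as $1/L$, but \QD selects uniformly from the \emph{currently covered} cells, so the selection probability is $1/|P_t|$, not $1/L$. If only $O(1)$ cells are populated when cell~$2$ holds $2k-1$ ones, your descent probability becomes $\Theta\bigl(\binom{2k-1}{k}p_m^k\bigr)$ rather than $\Theta\bigl(\binom{2k-1}{k}p_m^k/L\bigr)$, and the resulting lower bound is a full factor $L$ too small. The same issue bites for $k=1$: your drift bound $O(m/n^2)$ silently assumes $|P_t|=\Theta(n)$; with a constant-size population the drift is $\Theta(m/n)$ and multiplicative drift only delivers $\Omega(n\log n)$. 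Nothing in your proposal establishes that $\Omega(L)$ cells are filled before the final descent begins, and this is not automatic --- it is the content of the paper's Lemma~\ref{lem:covering-Omega-L-cells}, which uses a separate stochastic-domination argument (geometric decay of downward jump lengths) to show that, with probability $\Omega(1)$, at least $\varepsilon L$ cells are visited while the first $\varepsilon L$ cells remain untouched.

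A secondary issue: your plan to certify that cell~$2$ holds $2k-1$ ones relies on cell~$3$ being ``already equilibrated'' at $3k-1$ ones, which pushes the same problem one cell up and breaks down entirely for large $k$ (e.g.\ $k=(n+1)/2$, where cell~$3$ does not exist). The paper avoids this recursion: it tracks $X_t$, the minimum number of ones ever seen, and uses Lemma~\ref{lem:transition-probabilities} to show directly that when $X_t$ first drops into $\{0,\dots,2k-1\}$ it lands at exactly $2k-1$ with probability at least $1/2$, regardless of which cell the jump originates from. On \ONEMAX this value is then permanent, and the $\Omega(1)$ success probability is enough since it only costs a constant factor in the lower bound. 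This is both simpler and more robust than the equilibration-and-stopping-time route you outline.
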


We first show a technical lemma on decaying jump probabilities.
\begin{lemma}
\label{lem:transition-probabilities}
Let $p_{i, j}$ denote the probability of a standard bit mutation with mutation probability $p_m$ creating a mutant with $j$ ones from a parent with $i$ ones. For all $0 < p_m < 1$,
%
all $j < i$ and all $k \in \{0, \dots, j\}$,
\changed{
\[
    p_{i, j-k} \le \left(\frac{ip_m}{1-p_m}\right)^k \cdot p_{i, j}.
\]
}
\end{lemma}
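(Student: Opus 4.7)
The plan is to set up an explicit formula for $p_{i,j}$, reduce the claim to a single-step bound $p_{i, j-1} \le \tfrac{ip_m}{1-p_m} \cdot p_{i, j}$, and then iterate. Parametrising a standard bit mutation by the number $b$ of $0$-bits it flips, the number of $1$-bits flipped must equal $a = i-j+b$ in order to reach $j$ ones from $i$ ones, so
\[
p_{i,j} \;=\; \sum_{b=0}^{\min\{j,\,n-i\}} \binom{i}{i-j+b}\binom{n-i}{b}\, p_m^{i-j+2b}\,(1-p_m)^{n-i+j-2b},
\]
and the analogous formula (with $j$ replaced by $j-1$) represents $p_{i,j-1}$.

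I would then prove the single-step bound by term-by-term comparison. The ratio of the $b$-th summand of $p_{i,j-1}$ to the $b$-th summand of $p_{i,j}$ equals
\[
\frac{\binom{i}{i-j+1+b}}{\binom{i}{i-j+b}} \cdot \frac{p_m}{1-p_m} \;=\; \frac{j-b}{i-j+1+b} \cdot \frac{p_m}{1-p_m}.
\]
Since $b \ge 0$ and $j < i$, we have $j - b \le j \le i - 1$ and $i-j+1+b \ge 1$, so the binomial ratio is at most $i$. The summation range for $p_{i,j-1}$ is contained in that for $p_{i,j}$ (possibly dropping a single nonnegative top term), and all summands are nonnegative, so summing the term-wise inequality gives $p_{i, j-1} \le \tfrac{ip_m}{1-p_m} \cdot p_{i, j}$.

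Finally, a short induction on $k$ closes the argument: since the single-step bound holds for every index $j' < i$ and all intermediate targets $j, j-1, \ldots, j-k+1$ remain strictly less than $i$ (with $j-k \ge 0$ because $k \le j$), chaining $k$ applications of the single-step bound yields the claimed inequality. The main obstacle, such as it is, lies in the bookkeeping of summation ranges and in verifying the binomial-coefficient ratio bound uniformly in $b$; neither is deep, but both need to be written carefully so that the term-by-term comparison is valid for all values of $b$ in the (smaller) range of $p_{i,j-1}$.
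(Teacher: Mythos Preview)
Your proof is correct and follows essentially the same approach as the paper: both decompose $p_{i,j}$ by the number of flipped $0$-bits (the paper calls this $\ell$, you call it $b$), bound the termwise ratio of consecutive levels by $\tfrac{ip_m}{1-p_m}$ via the same binomial-coefficient ratio, and then iterate the single-step inequality $k$ times. The paper phrases the single-step bound as $p_{i,i-d,\ell} \ge \tfrac{1-p_m}{ip_m}\, p_{i,i-d-1,\ell}$ and bounds $\tfrac{d+\ell+1}{i-d-\ell} \ge \tfrac{2}{i-1} \ge \tfrac{1}{i}$, which is exactly your ratio bound $\tfrac{j-b}{i-j+1+b} \le i$ in reciprocal form.
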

\begin{proof}
For $j < i$ and $\ell \ge 0$ let $p_{i, j, \ell}$ denote the probability of a standard bit mutation with mutation probability~$p_m$ creating a mutant with $j$ ones from a parent with $i$ ones by flipping $i-j+\ell$ ones and flipping $\ell$ zeros, that is,
\[
    p_{i, j, \ell} = \binom{i}{i-j+\ell} \binom{n-i}{\ell} p_m^{i-j+2\ell} (1-p_m)^{n-i+j-2\ell}.
\]
\changed{Note that $p_{i, j, \ell} = 0$ if $\ell > \min\{j, n-i\}$.}
We claim that for all $i \in \{2, \dots, n\}$, all $d \in \{1, \dots, i\}$ and all $\ell \in \mathbb{N}_0$
\changed{
\begin{equation}
\label{eq:bound-p-i-less-d-ell}
    p_{i, i - d, \ell} \ge \frac{1-p_m}{ip_m} \cdot p_{i, i - d - 1, \ell}.
\end{equation}
}
This is trivial for $p_{i, i - d - 1, \ell} = 0$, hence we assume $p_{i, i - d - 1, \ell} > 0$ (which implies $i - d - \ell \ge 1$) and have
\changed{
\begin{align*}
     \frac{p_{i, i-d, \ell}}{p_{i, i-d-1, \ell}} =\;& \frac{\binom{i}{d+\ell}\binom{n-i}{\ell} p_m^{d+2\ell}(1-p_m)^{n-d-2\ell}}{\binom{i}{d+\ell+1} \binom{n-i}{\ell} p_m^{d+2\ell+1}(1-p_m)^{n-d-2\ell-1}}\\
     =\;& \frac{d+\ell+1}{i-d-\ell} \cdot \frac{1-p_m}{p_m}.
\end{align*}
}
Since $d + \ell \ge 1$ and $i \ge 2$, we have $\frac{d+\ell+1}{i-d-\ell} \ge \frac{2}{i-1} \ge \frac{1}{i}$ and this implies~\eqref{eq:bound-p-i-less-d-ell}. Consequently,
\changed{
\begin{equation*}
    p_{i, i-d} = \!\!\!\!\!\!\!\!\!\!\! \sum_{\ell=0}^{\min\{i-d,n-i\}} \!\!\!\!\!\!\!\!\!\! p_{i, i - d, \ell} \ge \!\!\!\!\!\!\!\!\!\! \sum_{\ell=0}^{\min\{i-d,n-i\}} \!\! \frac{1-p_m}{ip_m} \cdot p_{i, i - d - 1, \ell} =  \frac{1-p_m}{ip_m} \cdot  p_{i, i-d-1}.
\end{equation*}
}
The statement now follows from applying~\eqref{eq:bound-p-i-less-d-ell} repeatedly and bounding the resulting factors as
\changed{
\[
    \frac{1-p_m}{ip_m} \cdot \frac{1-p_m}{(i-1)p_m} \cdot \ldots \cdot \frac{1-p_m}{(i-k+1)p_m} \ge \left(\frac{1-p_m}{ip_m}\right)^k. \qedhere
\]}
%
\end{proof}

The following lemma shows that, with probability $\Omega(1)$, the map will be partially filled by the time the cells with lower indices will be reached. This implies that the probability of selecting a parent from the cell with smallest index is decreased by a factor of $\Theta(1/L)$, slowing down the exploration of smaller cells.
\begin{lemma}
\label{lem:covering-Omega-L-cells}
Consider the scenario of Theorem~\ref{thm:lower-bound-cover-time-QDk}.
There exists a constant $\varepsilon > 0$ such that, with probability $\Omega(1)$, \QD reaches a state where at least $\varepsilon L$ cells are covered and cells~1 to $\varepsilon L$ are not yet covered.

\end{lemma}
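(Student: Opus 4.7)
The plan is to combine a Chernoff bound on the initial search point with a growth analysis of $M_t$ (the number of covered cells) and a drift analysis of $L_t$ (the leftmost covered cell). By a Chernoff bound, with probability at least $1/2$, the initial search point satisfies $\ones{x^{(0)}} \ge n/2$ and so lies in a cell $j_0 \ge L/2$. We fix a small $\varepsilon > 0$ to be chosen and let $T^* := \min\{t : M_t \ge \varepsilon L\}$; it will suffice to show $L_{T^*} > \varepsilon L$ with constant probability, since this together with $M_{T^*} \ge \varepsilon L$ establishes the required state.

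For the growth of $M_t$, we will use that on \ONEMAX the representative of each covered cell $i$ quickly reaches the cell's upper boundary $ki - 1$, as this is the unique maximum in the cell. From this equilibrium, conditional on selecting $x^{(L_t)}$ as parent, creating an offspring in cell $L_t - 1$ by flipping exactly $k$ one-bits and no zeros has probability $\binom{L_tk-1}{k}p_m^k(1-p_m)^{n-k}$, which is $\Theta(1)$ when $L_t = \Omega(L)$; an analogous $\Theta(1)$ bound holds for extending $R_t$ rightward via a single zero-flip. Weighted by the parent-selection probability $1/M_t$, the chance of covering a new cell per step is $\Omega(1/M_t)$, so the expected time for $M_t$ to grow from $m$ to $m+1$ is $O(m)$ and hence $\E{T^*} = O(\varepsilon^2 L^2)$.

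For the drift of $L_t$, we plan to bound $\E{L_t - L_{t+1}} \le (1/M_t)\sum_{d \ge 1} d \cdot \binom{L_tk-1}{dk}p_m^{dk}(1-p_m)^{n-dk} = O(1/M_t)$, dominated by the $d=1$ term: multi-cell jumps carry an additional factor $p_m^k$ per extra cell skipped, and parents in cells above $L_t$ contribute smaller terms. Summing over the phases in which $M_t = m$ (each of expected length $O(m)$), for $m = 1, \ldots, \varepsilon L$, gives $\E{L_0 - L_{T^*}} = O(\varepsilon L)$. A Markov bound then yields $L_0 - L_{T^*} \le c_1 \varepsilon L$ with probability at least $1/2$ for some constant $c_1$; hence $L_{T^*} \ge L/2 - c_1\varepsilon L > \varepsilon L$ provided $\varepsilon < 1/(2(c_1+1))$. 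Intersected with the initialisation event, this yields the lemma with constant probability. The main obstacle will be rigorously formalising the drift bound on $L_t$: one must carefully enumerate all leftward transitions, including multi-cell jumps, and handle possibly non-contiguous coverage, both of which can be controlled via the $p_m^k$-suppression of large jumps.
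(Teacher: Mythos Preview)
Your overall strategy—track the leftmost covered cell $L_t$, bound its expected leftward drift, and apply Markov's inequality at the stopping time $T^*$ when $\varepsilon L$ cells are covered—is a reasonable route, but there is a genuine gap in the drift bound. Your formula
\[
\E{L_t - L_{t+1}} \le \frac{1}{M_t}\sum_{d \ge 1} d \cdot \binom{L_tk-1}{dk}p_m^{dk}(1-p_m)^{n-dk}
\]
presupposes that the representative of cell $L_t$ sits at the top of its cell with $L_tk-1$ ones. On \ONEMAX this is the eventual equilibrium, but immediately after cell $L_t$ is first reached its representative may sit anywhere in $[(L_t-1)k, L_tk-1]$; if it sits near the bottom, a single one-bit flip already produces a left-move, so the left-move probability is $\Theta(1)$ rather than $\Theta(p_m^k)$. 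Your remark that the representative ``quickly reaches the cell's upper boundary'' does not resolve this: reaching the top from the bottom takes $\Theta(M_t k)$ steps in expectation, during which many further left-moves may occur. A related issue is that your phase-length estimate $O(m)$ relies on the same $\Theta(1)$ bound for the $k$-bit flip, which for general $k$ is only $\Theta((c')^k/k!)$ with a constant $c'$ depending on $L_t/L$; so neither the drift nor the phase length is as stated, even though their product (the per-phase decrease of $L_t$) does turn out to be $O(1)$. The argument can be salvaged—e.g.\ by observing that each left-move increases $M_t$ by one, so there are at most $\varepsilon L$ left-moves before $T^*$, and then showing that the expected jump \emph{conditional on a left-move} is $O(1)$ regardless of the representative's position—but this conditional bound is precisely what needs a careful proof, and your sketch does not supply it.

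The paper takes a different and somewhat cleaner route. Instead of the cell index $L_t$ it tracks $X_t$, the smallest number of ones seen so far, thereby sidestepping the ``where in the cell'' issue entirely. It first restricts attention to the regime $X_t \le \alpha := \min\{n/3, n/(4c)\}$, where Lemma~\ref{lem:transition-probabilities} gives $p_{i,j-1} \ge 2\,p_{i,j}$, so that in every $X_t$-decreasing step the decrease is stochastically dominated by a geometric random variable with parameter $1/2$. A Chernoff bound for sums of geometrics then shows that $\alpha/12$ decreasing steps reduce $X_t$ by at most $\alpha/4$ with high probability. The cell count comes for free: since $X_t$ is integer-valued and non-increasing, at most $k$ decreasing steps can land in the same cell, so $\alpha/12$ decreasing steps cover at least $\alpha/(12k)=\varepsilon L$ cells, all of them with index above $\varepsilon L$. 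This avoids both the Markov-inequality slack and the phase-length bookkeeping in your plan, and it yields the claim with probability $1-o(1)$ rather than merely $\Omega(1)$ (after the constant-probability initialisation event). The paper also treats $k=\Omega(n)$ separately, which your sketch omits.
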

\begin{proof}
If $k=(n+1)/2$ then since $k$ divides $n+1$ we know that $n+1$ is even and there are just two symmetric cells $1$ and $2$. By symmetry, the probability of initialising in cell~2 is $1/2$ and then the claim follows.
The same arguments apply to values $k < (n+1)/2$ and $k = \Omega(n)$ as then $L = O(1)$ and covering one cell with a solution of at least $n/2$ ones is sufficient when choosing $\varepsilon := 1/L$.


\changed{
In the following, we assume $k = o(n)$ and define $X_t$ as the smallest number of ones in any search point seen so far. Note that, trivially, $X_{t+1} \le X_t$.
We claim that with probability $1-o(1)$ the following statements hold for a function $\alpha \coloneqq \alpha(n) \coloneqq \min\{n/3, n/(4c)\}$. (1) the algorithm is initialised with $X_0 \ge \alpha$, (2) whenever $X_t$ is decreased, it is decreased by at most $\alpha/4$, (3) while $X_t \in [1, \alpha]$, steps decreasing $X_t$ have an exponential decay (as will be made precise in the following). Claim (1) follows from Chernoff bounds since $\alpha \le n/3$. Claim (2) follows since decreasing $X_t$ by at least $\alpha/4$ requires flipping $\Omega(n)$ bits. This has probability at most $\binom{n}{\alpha/4} p_m^{\alpha/4} \le 2^n \cdot (c/n)^{\alpha/4} = n^{-\Omega(n)}$ whereas accepted steps decreasing $X_t$ have probability at least $\Omega(n^{-k}/L) = n^{-o(n)}$. Along with a union bound over $O(n)$ values of $X_t$, the conditional probability of decreasing $X_t$ by at least $\alpha/4$ throughout the run is $O(n) \cdot n^{-\Omega(n)}/n^{-o(n)} = n^{-\Omega(n)}$.
It remains to prove (3).
}


\changed{
Assuming $X_t \le \alpha$, we consider the decrease of $X_t$ in steps in which $X_{t+1} < X_t$ and consider the difference $D_t := X_{t+1} - X_t$.}
Adopting the notation from Lemma~\ref{lem:transition-probabilities}, we have
\changed{
\begin{align*}
    \prob{D_t \ge \ell \mid X_{t+1} < X_t, X_t = i}
    =\;& \frac{\sum_{j=0}^{i-\ell} p_{i, j}}{\sum_{j=0}^{i-1} p_{i, j}}
    \le \frac{\sum_{j=0}^{i-\ell} p_{i, j}}{\sum_{j=\ell-1}^{i-1} p_{i, j}}.
\end{align*}
Recalling $\alpha \le n/(4c)$, by Lemma~\ref{lem:transition-probabilities} we have for all $i \le \alpha$ and all $\ell \in \{0, \dots, i-1\}$.
\[
    p_{i, j} \ge \left(\frac{1-p_m}{ip_m}\right)^{\ell-1} \cdot p_{i, j-\ell+1} \ge \left(\frac{n}{2ci}\right)^{\ell-1} \cdot p_{i, j-\ell+1} \ge 2^{\ell-1}  p_{i, j-\ell+1}
\]
for $n$ large enough such that $1-p_m \ge 1/2$.} Hence the above is at most
\begin{align*}
    \;& \frac{\sum_{j=0}^{i-\ell} p_{i, j}}{2^{\ell-1} \sum_{j=\ell-1}^{i-1} p_{i, j-\ell+1}}
    = \frac{\sum_{j=0}^{i-\ell} p_{i, j}}{2^{\ell-1} \sum_{j=0}^{i-\ell} p_{i, j}}
    = 2^{-(\ell-1)}.
\end{align*}
Thus, $(D_t \mid X_{t+1} < X_t)$ is stochastically dominated by a geometric random variable $Z_t$ with parameter $1/2$.
In other words, the progress towards $0^n$ in steps in which $X_t$ decreases is thus stochastically dominated by a sequence $Z_0, Z_1, \dots $ of geometric random variables with parameter $1/2$.
\changed{Now choose $\varepsilon \coloneqq \alpha/(12Lk)$ and note $\varepsilon = \Theta(1)$.}

\changed{
Consider a phase of $\alpha/12$ steps. The expected total decrease in $\alpha/12$ steps is at most $\alpha/6$.
By Chernoff bounds for sums of geometric random variables (Theorem~10.32 in~\cite{DoerrProbabilityChapter2020}), the probability of the total decrease being at least $\alpha/4$ is at most $e^{-\Omega(\alpha)} = e^{-\Omega(n)}$.
Hence, with the converse probability $1 - e^{-\Omega(n)}$ at least $\alpha/12$ steps decreasing $X_t$ are necessary to reduce $X_t$ by a total of $\alpha/4$. By claims (1) and (2), with high probability the first $X_t$-value of at most $\alpha$ is at least $3\alpha/4$, hence the final $X_t$-value is at least $\alpha/2$.
}

\changed{
Since $\alpha/2 = 6\varepsilon L k$, the event $X_t \ge \alpha/2$ implies that the first $\varepsilon L$ cells have not been reached yet.
Since at most $k$ decreasing steps may concern the same cell, during $\alpha/12$ steps decreasing $X_t$, at least $\alpha/(12k) = \varepsilon L$ cells are visited.
%
Observing that the union bound of all failure probabilities is $o(1)$ completes the proof.
}
%
\end{proof}

Now we put everything together to prove Theorem~\ref{thm:lower-bound-cover-time-QDk}.
\begin{proof}[Proof of Theorem~\ref{thm:lower-bound-cover-time-QDk}]
The upper bounds on the expected cover time follow from Theorem~\ref{thm:qdk_onemax_covtime_allones}. For $k=1$ the cover time equals the optimal-cover time. For $k \ge 2$ we use a simple and crude argument: as long as not all cells are covered optimally, there is a cell~$i$ whose search point can be improved. An improvement happens with probability $1/L \cdot 1/(en)$, if the right parent is selected and an improving Hamming neighbour is found. The expected waiting time is at most $enL$ and at most $n$ such steps are sufficient to cover all cells optimally. This only adds a term $O(n^2L) = O(n^3)$ to the upper bound, which is no larger than the claimed runtime bound.

It remains to show the claimed lower bounds for the expected cover time.
By Lemma~\ref{lem:covering-Omega-L-cells}, with probability $\Omega(1)$, \QDk reaches a state where at least $\varepsilon L$ cells are covered and the first $\varepsilon L$ cells are not yet covered. We work under the condition that this happens and note that this only incurs a constant factor in the lower bound.

We first show the statement for $k \ge 2$ and $k \le \varepsilon(n+1)/2$. \added{We may assume that $\varepsilon (n+1)/2 \le n/(16c)$ as otherwise we may simply choose a smaller value for $\varepsilon$. This ensures that the exponential decay of jump lengths shown in the proof of Lemma~\ref{lem:covering-Omega-L-cells} applies for all search points with at most $4k \le 4\varepsilon(n+1)/2$ ones.}
Let $S$ be the set of all search points with at most $2k-1$ ones. As the first $\varepsilon L$ cells are not covered yet, the minimum number of ones seen so far is at least $\varepsilon k L = \varepsilon(n+1) \ge 2k$.
Applying Lemma~\ref{lem:transition-probabilities} as in the proof of Lemma~\ref{lem:covering-Omega-L-cells} we get that, for all \changed{$2k \le i \le 4\varepsilon(n+1)/2$}, the probability that a search point with exactly $2k-1$ ones is reached, when $S$ is reached for the first time, is
\begin{align*}
    & \Prob{X_{t+1} = 2k-1 \mid X_t = i \wedge X_{t+1} \le 2k-1} = \frac{p_{i, 2k-1}}{\sum_{j=0}^{2k-1} p_{i, j}}\\
    \ge\;& \frac{p_{i, 2k-1}}{\sum_{j=0}^{2k-1} 2^{-(2k-1-j)} p_{i, 2k-1}} = \frac{1}{\sum_{j=0}^{2k-1} 2^{-j}} \ge \frac{1}{2}.
\end{align*}
\added{The same probability bound also holds for $i > 4\varepsilon(n+1)/2$ as the conditional probability of jumping to $S$ from a search point with more than $4\varepsilon(n+1)/2$ ones is asymptotically smaller than the probability of jumping to $S$ from cell~3.}
Since we are optimising \ONEMAX, the algorithm will maintain a search point with exactly $2k-1$ ones in cell~2 forever. To discover cell~1, the algorithm either needs to pick the search point from cell~2 as parent and jump to a search point of at most $k-1$ ones, or pick a parent from some cell~$i$, $i \ge 3$, that has at least $(i-1)k$ ones and jump to the same target.
The probability of any such event is at most
\begin{align*}
& \frac{1}{\varepsilon L} \left(\sum_{j=0}^{k-1} p_{2k-1, j} + \sum_{i=3}^{L}\sum_{j=0}^{k-1} p_{(i-1)k, j}\right)\\
\intertext{\changed{For all $m \ge k-1$, since at least $m-(k-1)$ bits have to flip, $\sum_{j=0}^{k-1} p_{m, j} \le \binom{m}{m-(k-1)} p_m^{m-k+1} = \binom{m}{k-1} p_m^{m-k+1}$, thus this is at most}}
& \le\;\frac{1}{\varepsilon L} \left(\binom{2k-1}{k-1}p_m^k + \sum_{i=3}^{L} \binom{(i-1)k}{k-1} p_m^{(i-2)k+1}\right)
\end{align*}
Using that for $k \ge 2$
\[
    \binom{(i-1)k}{k-1}
    \!\!\ \le \!\!
    \frac{((i-1)k)^{k-1}}{(k-1)!}
    \!\! \le \!\! (i-1)^{k-1} \frac{k^{k-1}}{(k-1)!} \le (i-1)^{k-1} \binom{2k-1}{k-1}
\]
we get, for $i \ge 4$,
\begin{align*}
    \binom{(i-1)k}{k-1} p_m^{(i-2)k+1} \le\;& (i-1)^{k-1} \binom{2k-1}{k-1} p_m^{(i-2)k+1}\\
    =\;& \binom{2k-1}{k-1} p_m^k \cdot \frac{p_m}{i-1} \left((i-1)p_m^{(i-3)}\right)^k\\
    \le\;& \binom{2k-1}{k-1} p_m^k \cdot \frac{p_m}{i-1}
\end{align*}
as $(i-1)p_m^{i-3} \le 1$ for all $i \ge 4$ and large enough~$n$.

The summand for $i=3$ is
\[
    \binom{2k}{k-1} p_m^{k+1} = \binom{2k-1}{k-1} \cdot \frac{2k}{k+1} p_m^{k+1} \le \binom{2k-1}{k-1} 2p_m^{k+1}.
\]
Together, the sought probability is bounded by
\begin{align*}
     & \frac{1}{\varepsilon L} \left(\binom{2k-1}{k-1}p_m^k + 2p_m \sum_{i=3}^{L} \binom{2k-1}{k-1} p_m^{k}\right)\\
     =\;&  \frac{1}{\varepsilon L} \left(\binom{2k-1}{k-1}p_m^k \left(1 + 2p_m L\right)\right)
     = O\left(\binom{2k-1}{k-1}\frac{p_m^{k}}{L}\right).
\end{align*}
Taking the reciprocal yields a lower bound on the expected time for finding cell~1.

For $\varepsilon(n+1)/2 < k \le (n+1)/2$ all cells have linear width. Since the probability of initialising with at least $n/2 + \sqrt{n}$ ones is $\Omega(1)$ by properties of the binomial distribution, it is easy to show that the algorithm will reach a search point with $2k-1$ ones before reaching cell~1, with high probability. Then we apply the above arguments.


For $k=1$ we argue differently as the time is no longer dominated by the expected waiting time to jump from cell~2 to cell~1.
We assume that the event stated in Lemma~\ref{lem:covering-Omega-L-cells} occurs and consider $X_t$ as the minimum number of ones in any search point of the population at time $t$, $P_t$.
Let $Q_t$ be the number of ones in the parent chosen in generation~$t$. Define the drift
\[
    \Delta_i \coloneqq \E{X_{t}-X_{t+1} \mid X_t = s, Q_{t+1} = X_t + i}
\]
under the condition that a parent is chosen with a number of ones by $i$ larger than $X_t$. Note that $\Delta_0$ is bounded by the expected number of flipping 1-bits,
$
    \Delta_0 \le s \cdot p_m = \frac{cs}{n}$.
We focus on $X_t \le n^{1/5}$ and show that here the drift is  at most
\begin{equation}
\label{eq:drift}
    \hspace*{-0.2cm} \E{X_t \! - \! X_{t+1} \mid X_t = s} \! = \! \frac{1}{|P_t|} \! \sum_{x \in P_t} \! \Delta_{\ones{x}-s}
    \le \frac{1}{\varepsilon L} \sum_{i=0}^{n-s} \Delta_i = O(s/n^2)\!\!\!
\end{equation}
where the first equation follows from the law of total probability and the last equality remains to be proven. We will show that for all $i \ge 1$, $\Delta_i$ is asymptotically much smaller than $\Delta_0$.
We trivially have $\Delta_i \le \Prob{X_{t+1} < X_t \mid X_t, Q_t = X_t + i} \cdot (X_t+i)$ since the maximum possible decrease is $X_t+i$.
In order to decrease $X_t$ by mutating a parent of $X_t+i$ ones, it is necessary to flip at least $i+1$ ones. For $i \le X_t + n^{1/5}$ This event has probability
\begin{align*}
    \binom{X_t+i}{i+1} p_m^{i+1} \le \binom{2n^{1/5}}{i+1} p_m^{i+1}
    \le (2n^{1/5}p_m)^{i+1} \le (2cn^{-4/5})^{i+1}
\end{align*}
and thus $\Delta_i \le (2cn^{-4/5})^{i+1} \cdot 2n^{1/5} \le O(n^{-7/5})$ and $\sum_{i=0}^{n^{1/5}-1} \Delta_i = O(n^{-6/5})$. For all $i \ge X_t + n^{1/5}$ the above probability is at most
\begin{align*}
    \binom{X_t+i}{i+1} p_m^{i+1} \le \frac{(X_t + i)^{i+1}}{i!}  p_m^{i+1}
    \le \frac{n^{i+1}}{i!} p_m^{i+1} = \frac{c^{i+1}}{i!}.
\end{align*}
Since $i! \ge (i/e)^i$, this term is $n^{-\Omega(n^{1/5})}$ and $\sum_{i=n^{1/5}}^{n-X_t} \Delta_i = n^{-\Omega(n^{1/5})}$.
Along with $L = n+1$, \eqref{eq:drift} follows.

Finally, we apply the multiplicative drift theorem for lower bounds, Theorem~2.2 in~\cite{Witt2013}, to the process $X_t$, using the following parameters: $s_{\min} \coloneqq \log^2 n$ and $\beta \coloneqq 1/\ln n$. The condition on the drift,
$\E{X_t - X_{t+1} \mid X_t = s} \le \delta s$, is satisfied for $\delta \coloneqq c'/n^2$, $c'$ being a sufficiently small positive constant.

We still need to show that, for all $s \ge s_{\min} = \log^2 n$,
\[
    \Prob{X_t - X_{t+1} \ge \beta s \mid X_t = s} \le \frac{\beta \delta}{\ln n} = \frac{c'}{n^2(\ln n)^2}.
\]
Since $\beta s \ge \beta s_{\min} = \Omega(\log n)$, the above bound holds since the probability of flipping a logarithmic number of bits with $p_m = c/n$ is superpolynomially small.

By the multiplicative drift theorem, the expected time for reaching a distance of at most $s_{\min}$, and thus the time to reach cell~1, starting from a distance of at least $\varepsilon L = \varepsilon n$, is at least
\begin{align*}
    & \frac{\ln(\varepsilon n) - \ln(s_{\min})}{\delta} \cdot \frac{1-\beta}{1+\beta}\\
    =\;& \frac{n^2(\ln(\varepsilon n) - 2\ln(\log n))}{c'} \cdot \frac{1-1/(\ln n)}{1+1/(\ln n)}
    = \Omega(n^2 \log n). \qedhere
\end{align*}
\end{proof}

As an aside, we point out that Theorem~\ref{thm:lower-bound-cover-time-QDk} gives a lower bound on the expected time for GSEMO to cover the whole Pareto front of the bi-objective test function \textsc{OneMinMax}$(x) \coloneqq (\ONEMAX(x), n - \ONEMAX(x))$. To our knowledge, this is a novel (albeit not surprising) result; previously only for SEMO a lower bound of $\Omega(n^2 \log n)$ was known~\cite{Covantes2018b}. SEMO only flips one bit in each mutation, and previous work avoided the complexity of analysing standard mutations.
\begin{theorem}
The expected time for \QD, operating on the \noo{1} feature space, on \ONEMAX to cover all $n+1$ cells equals the expected time of GSEMO covering the Pareto front of \textsc{OneMinMax}.
\end{theorem}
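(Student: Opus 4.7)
The plan is to exhibit a natural coupling under which the two stochastic processes are identically distributed at every time step, which immediately forces equality of any hitting-time expectation, and in particular the one in the statement. The key observation is that both algorithms' states can be identified with the same object: a partial function from $\{0, 1, \dots, n\}$ to $\{0,1\}^n$ assigning at most one representative search point to each possible number of ones.

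First I would set up the bijection between states: a map $M$ in \QD corresponds to the population $P$ in GSEMO via $M(i) = x \Longleftrightarrow x \in P \text{ and } \ones{x} = i$. Under this identification, ``all $n+1$ cells of $M$ covered'' is the same event as ``$P$ contains a representative for every number of ones'', which in turn is ``$P$ covers the full Pareto front of \textsc{OneMinMax}'', since every search point is Pareto-optimal for \textsc{OneMinMax}$(x) = (\ones{x}, n - \ones{x})$. Both algorithms initialise by sampling a single $x \in \{0,1\}^n$ uniformly at random and storing it, so the initial distributions on states agree.

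Next I would verify that one transition step has the same distribution in both algorithms. Parent selection is uniform over the stored set in each case, and mutation is standard bit mutation with the same mutation rate. The only place where differences could creep in is the insertion rule. For \QD on \ONEMAX, if cell $\ones{y}$ is empty then $y$ is stored; if it is occupied by some $z$ then $\ones{z} = \ones{y}$ implies $f(y) = f(z)$, so $y$ replaces $z$. For GSEMO on \textsc{OneMinMax}, any two search points with different numbers of ones are mutually non-dominated (the objectives are in perfect conflict), so only solutions sharing the same number of ones interact; such pairs weakly dominate each other, so $y$ is added and any solution with $\ones{\cdot} = \ones{y}$ is removed. In both cases the net effect is: replace the representative of level $\ones{y}$ if it exists, otherwise insert $y$.

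Thus, under the described coupling, the sequence of states is identical in law for the two algorithms. In particular, the first generation at which all levels $0, \dots, n$ are populated has the same distribution, and hence the same expectation. The argument is purely structural and does not require any drift computation; the main point to be careful about is handling the tie-breaking in GSEMO's domination-based update consistently with \QD's ``$\geq$'' comparison, which on \ONEMAX amounts to always replacing within a level.
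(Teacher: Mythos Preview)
Your proposal is correct and follows essentially the same approach as the paper: both argue that the two algorithms are operationally identical when one identifies cells of the \noo{1} map with levels of the Pareto front. The paper's proof is a one-sentence sketch of this coupling, whereas you have spelled out the state correspondence, the matching initialisation, and in particular the tie-breaking behaviour within a level (QD's ``$\geq$'' replacement on \ONEMAX versus GSEMO's removal of weakly dominated duplicates on \textsc{OneMinMax}); this extra care is warranted but does not change the underlying idea.
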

\begin{proof}
Both algorithms keep one search point in every cell and create new search points by choosing a covered cell uniformly at random, applying standard bit mutation and keeping the result if it covers an uncovered cell.
\end{proof}

\section{Monotone submodular function optimisation}
\label{sec:runtime_submodular_optimisation}


We now turn towards the NP-hard problem of submodular function maximisation. Submodular functions generalise many well-known NP-hard combinatorial optimisation problems, e.g., finding a maximum cut in a graph, and are thus of utmost relevance. Let $\Omega$ be a finite ground set and let $f : 2^{\Omega} \to \mathbb{R}$ be a set function. The function $f$ is \emph{submodular} if for all $A, B \subseteq \Omega$
\begin{align*}
    f(A \cup B) + f(A \cap B) \leq f(A) + f(B)
\end{align*}
and $f$ is \emph{monotone} if $f(A) \leq f(B)$ for all $A \subseteq B \subseteq \Omega$. \Wlog we assume that $f$ is normalised, i.e., $f(A) \geq 0 \,\forall A \subseteq \Omega$ and ${f(\emptyset) = 0}$. Given a constraint $r$ the goal is to find a set $\OPT \subseteq \Omega$ with $f(\OPT) = \max_{A \subseteq \Omega, |A|\leq r} f(A)$.
Friedrich~\&~Neumann~\cite{FN_2014_submodular_maximisation,FN_2015_submodular_maximisation} were the first to study the performance of evolutionary algorithms on monotone submodular functions. In particular they showed that GSEMO, using a bi-objective fitness function, obtains a $(1-1/e)$-approximation on any monotone submodular function with a single uniform matroid constraint in expected time $O(n^2 (\log(n) + r))$.
\begin{algorithm}[t]
\caption{Global SEMO algorithm}\label{alg:gsemo}
Choose $x \in \{0,1\}^n$ uniformly at random\;
$P = \{x\}$\;
\While{termination condition not met}{
  Select $x \in P$ uniformly at random\;
  Generate $y$ from $x$ by bit-wise independent mutation with probability $1/n$\;
  \If{$y$ is not strictly dominated by any other $x' \in P$}{
    $P = P \cup \{y\}$\;
    Delete all solutions $z \in P$ that are dominated by $y$\;
  } 
} 
\end{algorithm}
The authors consider the fitness function $g(x) := (z(x), |x|_0)$ where $z(x) = f(x)$ if the solution $x$ is feasible (i.e., $|x|_1 \leq r$) and $z(x) = -1$ otherwise. They use the concept of \emph{Pareto-dominance} and define $g(x) \geq g(y)$ if and only if $((z(x) \geq z(y)) \land (|x|_0 \geq |y|_0))$ holds; in this case $x$ (Pareto-)dominates $y$. They study the expected runtime of the GSEMO (see Algorithm~\ref{alg:gsemo}) until a solution $x^{*} = \argmax_{x \in P} z(x)$ with $f(x^{*})/\OPT \geq \alpha$ is hit for the first time.

In the following we show that \QD with $p_m \coloneqq 1/n$ also achieves an approximation ratio of $(1-1/e)$ in polynomial expected time $O(n^2(\log(n) + r))$ using the function $f$ itself as the fitness function.

\begin{theorem}
\label{thm:qd1_submod_uniform}
The expected time until \QD operating on the \noo{1} feature space has found a $(1-1/e)$-approximate solution for a monotone submodular function with a uniform cardinality constraint $r$ is $O(n^2 (\log(n) + r))$.
\end{theorem}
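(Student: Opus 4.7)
My plan is to adapt Friedrich and Neumann's greedy-approximation analysis of GSEMO to \QD's cell-based map, where the cell structure of the \noo{1} feature space plays the role of Pareto-dominance.

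I would split the argument into two phases. Phase~1 is to bring $0^n$ into the map: cell~1 in the \noo{1} feature space can only be populated by $0^n$, and by Theorem~\ref{thm:qdk_onemax_covtime_allones} the expected time to cover all cells on any fitness is $O(n^2 \log n)$ for $k=1$. Hence after expected $O(n^2 \log n)$ iterations, $0^n \in M$.

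Phase~2 is a submodular greedy induction. I would define a potential $j_t \in \{0, 1, \dots, r+1\}$ as the largest index $j$ such that cell~$j$ currently stores some $x^{(j)}$ with $f(x^{(j)}) \geq \OPT \cdot (1 - (1 - 1/r)^{j-1})$. Since \QD{} never replaces a stored solution by one of strictly smaller fitness and the threshold depends only on the cell index, $j_t$ is non-decreasing. Once $0^n$ is in cell~1 we have $j_t \geq 1$ as the bound is vacuous there. While $j_t \leq r$, the solution $x^{(j_t)}$ has $j_t - 1 \leq r - 1$ ones, and by monotonicity and submodularity there exists $e \notin x^{(j_t)}$ with $f(x^{(j_t)} \cup \{e\}) - f(x^{(j_t)}) \geq (\OPT - f(x^{(j_t)}))/r$, which algebraically yields $f(x^{(j_t)} \cup \{e\}) \geq \OPT \cdot (1 - (1 - 1/r)^{j_t})$. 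The probability of selecting $x^{(j_t)}$ as parent and flipping only the corresponding $0$-bit in one iteration is at least $1/(n+1) \cdot (1/n)(1-1/n)^{n-1} = \Omega(1/n^2)$. The resulting offspring has exactly $j_t$ ones and therefore belongs to cell~$j_t+1$; by maximality of $j_t$ this cell is either empty or stores a solution that does not meet the approximation threshold for index $j_t+1$, so the \QD{} update rule accepts the offspring and $j_t$ is incremented.

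Hence each increment of $j_t$ takes expected $O(n^2)$ iterations, and after at most $r$ increments we reach $j_t = r+1$, at which point cell~$r+1$ holds a feasible solution with approximation ratio $1 - (1 - 1/r)^r \geq 1 - 1/e$. Summing the two phases gives $O(n^2 \log n) + r \cdot O(n^2) = O(n^2 (\log n + r))$. The main technical obstacle is the submodular-greedy induction carried by $j_t$, but this is entirely standard; \QD's setup is in fact somewhat cleaner than GSEMO's because separate cells handle feasibility automatically (no auxiliary objective or $z(x) = -1$ trick is needed), and the uniform parent-selection factor $1/L = 1/(n+1)$ absorbs the price of coexisting infeasible cells without any further case distinction.
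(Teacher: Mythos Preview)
Your proof is correct and follows essentially the same two-phase approach as the paper: first reach $0^n$ via the general cover-time bound, then run the standard submodular greedy induction cell by cell, with the $1/(n+1)$ parent-selection factor and the single-bit-flip probability yielding $O(n^2)$ per step. Your explicit potential $j_t$ and the observation that a non-threshold occupant of cell $j_t+1$ must have strictly smaller fitness than the greedy offspring (so acceptance is guaranteed) make the argument slightly more careful than the paper's presentation, but the underlying idea is identical.
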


The proof follows the proof of Theorem~2 in~\cite{FN_2015_submodular_maximisation} and the proof of the approximation quality of the deterministic greedy algorithm by Nemhauser et al.~\cite{NWF_1978_approx_submodular_maximisation}.
This is because the working principles of \QD and GSEMO match.
While the fitness function for GSEMO cleverly encodes the cardinality constraint as a second objective and uses Pareto-dominance to decide which solutions to keep, the \noo{1} feature space does the job for QD.

\begin{proof}[Proof of Theorem~\ref{thm:qd1_submod_uniform}]
By Theorem~\ref{thm:qdk_onemax_covtime_allones} the first cell of the map is populated with the solution $0^n$ in $O(n^2 \log n)$ steps in expectation. Since $0^n$ is the only solution with $n$ zeroes it will not be replaced.

More precisely, we show that once $0^n$ is stored in the map, \QD evolves solutions $x_j$ for $0 \leq j \leq r$ with approximation ratio
\begin{align}\label{eq:submod_partial_approx}
    f(x_j) \geq \left(1 - \left(1 - \frac{1}{r}\right)^j\right) \cdot f(\OPT)
\end{align}
in expected time $O(n^2r)$. The proof is by induction over $j$. As a base case we note that once the all zeroes string is in the map, Eq.~\eqref{eq:submod_partial_approx} holds for $x_0 = 0^n$ since
\begin{align*}
    0 = f(x_0) \geq \left(1 - \left(1 - \frac{1}{r}\right)^0\right) \cdot f(\OPT)
    = 0.
\end{align*}
Now assume that Eq.~\eqref{eq:submod_partial_approx} holds for every $0 \leq i \leq j < r$; we show that it also holds for $x_{j+1}$. To this end the algorithm needs to (i) select the $x_j \in M$ with $|x_j|_1 = j$ and (ii) add the element with the largest possible increase in $f$ in a single mutation. Let $\delta_{j+1}$ be the increase in fitness we obtain by this event. Then we get
\begin{align*}
    f(\OPT) \leq f(x_j \cup \OPT) \leq f(x_j) + r \cdot \delta_{j+1}
\end{align*}
where the first inequality follows from monotonicity of $f$ and the second by submodularity. Rearranging terms we obtain
\begin{align*}
    \delta_{j+1} \geq \frac{1}{r} \cdot \left(f(\OPT) - f(x_j)\right).
\end{align*}
This leads to
\begin{align*}
f(x_{j+1})
& \geq f(x_j) + \frac{1}{r} \cdot \left(f(\OPT) - f(x_j)\right) \\
& = \frac{f(\OPT)}{r} + f(x_j) \cdot \left(1 - \frac{1}{r}\right) \\
& \geq f(\OPT) \cdot \left(\frac{1}{r}  + \left(1 - \left(1 - \frac{1}{r}\right)^j\right) \cdot \left(1 - \frac{1}{r}\right)\right) \\
& = f(\OPT) \cdot \left(1 - \left(1 - \frac{1}{r}\right)^{j+1}\right).
\end{align*}
The latter for $j = r$ simplifies to
\begin{align*}
    f(\OPT) \cdot \left(1 - \left(1 - \frac{1}{r}\right)^{j+1}\right)
    \geq (1 - 1/e) \cdot f(\OPT)
\end{align*}
proving the claimed approximation ratio.
Note that the required selection plus mutation event happens at least with probability $\frac{1}{n+1} \cdot \frac{1}{en}$ where the first factor is the probability to select the respective parent and the second factor is a lower bound on the probability to greedily add the element with largest increase in fitness. Summing over all $r$ iterations we obtain an expected runtime of $\sum_{j=0}^{r-1} (n+1)ne = O(n^2r)$.
Adding the initial time to hit $0^n$ yields $O(n^2\log n) + O(n^2r) = O(n^2(\log(n) + r))$ and completes the proof.
\end{proof}

We see that \QD achieves the same approximation ratio as GSEMO and it does so in the same expected runtime. This comes as little surprise, as the algorithms work similarly (population with at most $(n+1)$ non-dominated individuals versus a map with at most $(n+1)$ individuals) as pointed out before.
However, \QD is a much simpler algorithm. The fitness function is the set function itself and no bi-objective formulation is required (that role is played by the feature space). As a consequence, the algorithm does not need to perform dominance checks in every iteration.
We are confident that many other results from~\cite{FN_2015_submodular_maximisation} also hold for~\QD.

\section{Minimum Spanning Tree Problem}
\label{sec:runtime_mst}



We now turn the focus to a problem-tailored feature space and a well-known combinatorial optimisation problem.
Consider a connected graph $G=(V,E)$ with $n=|V|$ nodes and $m = |E|$ edges and a weight function $w : E \to \mathbb{N}^{+}$ that maps each edge to a positive integer weight. Every connected acyclic sub-graph of $G$ is called a \emph{spanning tree}~(ST); let $\mathcal{T}$ be the set of all spanning trees of $G$. A spanning tree $T^{*}$ with $T^{*} = \argmin_{T \in \mathcal{T}} \sum_{e \in T} w(e)$ is a \emph{minimum spanning tree}~(MST) of $G$.
To simplify things we assume that all edge weights are pairwise different. In consequence, there is a unique MST.

We encode solution candidates as bit-strings $x$ from the search space $S=\{0,1\}^m$ where $x_i = 1$ means that the $i$th edge is in the solution and $x_i=0$ otherwise given an arbitrary, but fixed order of the edges $e_1, \ldots, e_m$. The fitness of solution candidate $x$ is -- overloading the function $w$ -- defined as
\begin{align*}
    w(x) := \sum_{i=1}^{m} x_i \cdot w(e_i).
\end{align*}
Let $cc(x)$ be the number of connected components of $x \in S$.

The MST problem was studied for Randomised Local Search~\cite{Neumann2006c}, ant colony optimisation~\cite{Neumann2010} and simulated annealing~\cite{Wegener2005,DoerrRW22}.

Neumann \& Wegener~\cite{Neumann2006c} used a bi-objective formulation of the MST problem where the fitness $f' : S \to \mathbb{R}^2$ of an individual $x \in S$ was defined by $w'(x) = (cc(x),w(x))$ which is to be minimised in both objectives simultaneously. Since $cc(x) \in \{1, \ldots, n\}$ the Pareto-front has size $n$ and thus the population size of SEMO and GSEMO is bounded from above by $n$. They showed that both SEMO and GSEMO, given an arbitrary initial search point, solve the bi-objective variant of the MST in expected time $O(nm \log(n w_{\max}))$ where $w_{\max} = \max_{e \in E} w(e)$ is the maximum edge weight. If weights are at most exponential in~$n$, ${w_{\max} \leq 2^{O(n)}}$, this yields an upper bound of $O(n^2m)$ since ${\log w_{\max} \le \log 2^{O(n)} = O(n)}$.

In the following we derive an upper bound for \QD with the feature space being defined as the number of CCs, $cc(x) \in \{1, \ldots, n\}$, of solution $x$. Thus, $L = n$ holds for the map size. For each number of CCs we store the best-so-far solution in the map. We refer to this as \emph{connected-components feature space}.

\begin{theorem}
\label{thm:qd1_mst_allzeroes_hitting_time}
\QD operating on the connected-components feature space locates $0^m$ (the empty edge set) in an expected time of $O(nm\log (nw_{\max}))$ where $w_{\max}$ is the maximum edge weight.
\end{theorem}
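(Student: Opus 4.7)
The plan is to bound the expected time by a multiplicative drift argument on the minimum weight stored anywhere in the map. Define $W_t$ as the minimum weight $w(x)$ over all solutions $x$ currently stored in $M$ after the $t$-th generation. Since edge weights are positive integers, $0^m$ is the unique string of weight $0$, and it lives in cell~$n$; hence $W_t = 0$ is equivalent to the event that the map covers cell~$n$ with the empty edge set. After initialisation, $W_0 = w(x_0) \le m\,w_{\max}$. First I would observe that $W_t$ is non-increasing: within any cell the incumbent is replaced only by an offspring of no larger weight, and covering a new cell can only decrease or leave the global minimum unchanged. It therefore suffices to drive $W_t$ down to~$0$.

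The key step is the drift estimate
\[
    \E{W_t - W_{t+1} \mid W_t > 0} \;\ge\; \frac{W_t}{enm},
\]
assuming mutation rate $p_m = 1/m$. Let $x^*$ be an incumbent achieving $W_t$; since $W_t > 0$, $x^*$ contains at least one edge. As the map covers at most $L = n$ cells, $x^*$ is selected as parent with probability at least $1/n$. For every edge $e \in x^*$, the probability of flipping exactly the bit corresponding to~$e$ and nothing else is $p_m (1-p_m)^{m-1} \ge 1/(em)$; the resulting offspring $y$ has weight $W_t - w(e) < W_t$. Since $W_t$ is the \emph{global} minimum across the whole map, the current incumbent of cell $cc(y)$, if any, has weight at least $W_t > w(y)$, so $y$ is accepted and $W_{t+1} \le W_t - w(e)$. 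Summing over all edges of~$x^*$ and recalling $\sum_{e \in x^*} w(e) = W_t$ yields the claimed drift.

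Finally I would invoke the standard multiplicative drift theorem with $\delta = 1/(enm)$, $X_{\max} = m\,w_{\max}$, and smallest positive state $s_{\min} = 1$ (by integrality of weights), obtaining an expected hitting time of $O(nm\,(1 + \ln(m\,w_{\max}))) = O(nm \log(n\,w_{\max}))$, using $m \le n^2$. The only real subtlety of the argument is the acceptance step: it is crucial that $W_t$ tracks the \emph{global} minimum over the map rather than a per-cell quantity, so that single-edge deletions from a minimum-weight incumbent are guaranteed to strictly undercut every other incumbent, including the one occupying their destination cell. This simultaneously accounts for ``vertical'' progress (replacing the incumbent in $x^*$'s own cell when a cycle edge is removed) and ``horizontal'' progress (covering or improving a neighbouring cell when a bridge is removed), without having to analyse the two cases separately.
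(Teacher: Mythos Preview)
Your proof is correct and follows essentially the same route as the paper: track the minimum weight stored in the map, observe it is non-increasing, establish a multiplicative drift of $W_t/(enm)$ via single-edge deletions from a minimum-weight incumbent, and finish with the multiplicative drift theorem. Your version is in fact slightly more careful than the paper's in two places---you use the correct initial bound $W_0 \le m\,w_{\max}$ (the paper writes $n\,w_{\max}$, which is not guaranteed) and you make the acceptance argument explicit---but the structure and all key ideas coincide.
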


\begin{proof}
We follow well-established arguments for the analysis of the \EA and GSEMO on MST.
Let $x_t$ denote a solution with minimum weight amongst all solutions stored by \QD at time~$t$. Note that $w(x_t)$ is non-increasing over time.
If there are $k$ edges selected in $x_t$, there are $k$ distinct mutations flipping only a single selected edge, and not flipping any unselected edges. As all these mutations, when applied at the same time, decrease the weight from $w(x_t)$ to 0, the expected decrease in weight, when applying one such mutation chosen uniformly at random, is $w(x_t)/k$. The event we condition on has probability $k \cdot 1/m \cdot (1-1/m)^{m-1} \ge k/(em)$. In addition, the probability of selecting $x_t$ as parent is at least $1/n$ as there are at most $n$ populated cells in \QD's map. Together,
\[
    \E{w(x_{t+1}) \mid x_t} \le w(x_t) \cdot \left(1 - \frac{1}{enm}\right).
\]
The weight of the initial solution is $w_0 \le n w_{\max}$. Applying the multiplicative drift theorem yields an upper bound of $O(nm \log(n w_{\max}))$.
\end{proof}

The time bound matches the one for the expected time until the population of GSEMO contains $0^m$ (see \cite[Theorem~1]{NW2005_mst_multi}).

We remark that the preliminary version of this work published at GECCO 2023~\cite{Bossek2023} claims an upper bound of $O(nm \log n)$ in its Theorem~6.1. This result matches the above statement when all weights are polynomial in $n$. Regrettably, the proof in~\cite{Bossek2023} is flawed as mutations adding edges while also removing edges may increase the total number of edges. Hence, the argument used in~\cite{Bossek2023} that \QD may remove all edges by mutations flipping one bit is not sufficient. The authors thank Frank Neumann for pointing this out. To our knowledge the upper bound of $O(nm \log(n w_{\max}))$ is the best known upper bound for the expected time of the (1+1)~EA and GSEMO for finding an empty selection of edges in the MST problem~\cite{Neumann2005}, along with a cruder bound of $O(nm^2 \log n)$ due to Reichel and Skutella~\cite{Reichel2009} that removes the dependence on weights. We conjecture that a time bound of $O(nm \log n)$ holds for all weights, but we currently do not have a proof and therefore have to leave this for future work.

\begin{theorem}
\label{thm:qd1_mst}
\QD operating on the connected-components feature space and starting with $0^m \in M$, finds a minimum spanning tree in expected time $O(n^2m)$.
\end{theorem}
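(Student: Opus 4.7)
The plan is to imitate Kruskal's algorithm inside the map by proving a monotone invariant: an increasing prefix of cells in the map stores the globally optimal forest of the corresponding size. For each $k \in \{0, 1, \dots, n-1\}$, let $F_k^\ast$ denote the minimum-weight forest of $G$ with exactly $k$ edges. Since all edge weights are distinct, $F_k^\ast$ is unique, and matroid greedy gives $F_0^\ast = 0^m$, $F_{k+1}^\ast = F_k^\ast \cup \{e_{k+1}\}$ where $e_{k+1}$ is the lightest edge of $G$ not closing a cycle with $F_k^\ast$, and $F_{n-1}^\ast$ is the MST.

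First I would verify that $F_k^\ast$ is also the unique minimum-weight solution that QD can store in cell $cc = n-k$. Any $x$ with $cc(x) = n-k$ has at least $k$ selected edges, and if strictly more then it contains a cycle whose heaviest edge can be removed without changing $cc(x)$ but strictly decreasing $w(x)$. Hence the cell-minimum is a forest on $k$ edges, and among these $F_k^\ast$ is uniquely optimal. Consequently, once $F_k^\ast$ is placed in its cell it is never overwritten.

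Next I would define $I(t)$ as the largest $i$ such that, at time $t$, cell $cc = n-k$ contains $F_k^\ast$ for every $k \le i$. Since $0^m = F_0^\ast$ sits in cell $cc = n$ at time $0$ by hypothesis and $I$ is non-decreasing by the previous paragraph, it suffices to bound the expected time for $I$ to reach $n-1$; at that moment the MST occupies cell $cc = 1$. If $I(t) = I < n-1$, the event ``select $F_I^\ast$ as parent and flip only the bit of $e_{I+1}$'' produces $F_{I+1}^\ast$ in cell $cc = n-I-1$, which is then stored. Its probability is at least $(1/|M(t)|) \cdot (1/m)(1-1/m)^{m-1} \ge 1/(enm)$, so each unit increment of $I$ takes expected $O(nm)$ iterations, and summing over the at most $n-1$ increments yields the claimed $O(n^2 m)$.

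The main obstacle will be pinning down the monotonicity of $I$ cleanly: it relies on the distinct-weights assumption, which makes each $F_k^\ast$ uniquely optimal within its cell, and on QD's acceptance rule keeping the best solution per cell. Once that invariant is secure, the rest is an elementary waiting-time calculation using single-bit mutations only, avoiding the two-bit edge-swap and multiplicative-drift machinery needed in the GSEMO analysis of Neumann and Wegener.
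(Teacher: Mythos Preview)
Your proposal is correct and follows essentially the same approach as the paper: both imitate Kruskal by showing that, starting from $0^m$, single-bit insertions with probability at least $1/(enm)$ successively fill the cells $cc=n,n-1,\dots,1$ with the unique optimal forests, yielding $O(n^2m)$ in total. Your write-up is in fact a bit more careful than the paper's, since you explicitly argue that the cell optimum must be a forest and use uniqueness (from distinct weights) to justify that the invariant $I(t)$ is non-decreasing; the paper states the analogous facts more tersely but relies on the same reasoning.
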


\begin{proof}
For $i = 1, \ldots, n$ let $x_i^{*}$ be the optimal spanning forest with $i$ connected components, i.e., \begin{align*}
    cc(x_i^{*}) = i \quad \text{ and } \quad w(x_i^{*}) = \min_{x \in S, cc(x)=i} w(x).
\end{align*}
It holds that $x_n^{*}=0^m$ since $cc(0^m)=n$ and $0^m$ is the empty edge set; all edge weights are positive and thus adding edges can only increase the fitness. Clearly, $x_1^{*}$ is a minimum spanning tree of the input graph. Given $x_i^{*}, i \in \{2, \ldots, n\}$ we can reach $x_{i-1}^{*}$ by injecting an edge of minimal weight into $x_i^{*}$ which does reduce the number of CCs by one or, put differently, does not close a cycle. We call such a step a \emph{Kruskal-step} as it is exactly the way Kruskal's well-known MST-algorithm builds an MST (see, e.g., \cite{Cormen2009_algorithms}). Now, starting with $x_n^{*}$ we can create the sequence $x_{n-1}^{*}, x_{n-2}^{*}, \ldots, x_1^{*}$ by $n-1$ consecutive Kruskal-steps. Having $x_i^{*} \in M$ the probability for a Kruskal-step is at least $1/(enm)$ as we need to select the right cell and flip exactly one bit.
In consequence, a MST is found after at most $(n-1)enm = O(n^2m)$ steps in expectation.
\end{proof}

Combining the results of Theorem~\ref{thm:qd1_mst_allzeroes_hitting_time} and Theorem~\ref{thm:qd1_mst} yields the following result. For edge weights bounded from above by $2^{O(n)}$ this matches the result by Neumann \& Wegener for (G)SEMO.

\begin{corollary}
\label{cor:qd1_mst}
On every connected graph $G=(V,E)$ with $n$ nodes, $m$ edges and $w : E \to \mathbb{N}^{+}$ with $w(e) \le 2^{O(n)}$ for all $e \in E$, \QD operating on the connected-components feature space and starting with an arbitrary initial solution finds a minimum spanning tree of $G$ in expected time $O(n^2m)$.
\end{corollary}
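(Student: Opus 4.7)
The plan is to combine Theorem~\ref{thm:qd1_mst_allzeroes_hitting_time} and Theorem~\ref{thm:qd1_mst} by phases, exploiting that once $0^m$ enters the map it can never leave, so the analysis of the second theorem applies unmodified to the continuation of the run.

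First, I would observe that the cell associated with $cc=n$ connected components is occupied exclusively by $0^m$: any $x \in \{0,1\}^m$ with $cc(x)=n$ must have all $n$ vertices isolated, which forces $x=0^m$. Hence once $0^m$ is produced, it permanently occupies cell $n$ of the map, regardless of what the algorithm does afterwards. In particular, the Kruskal-step chain $x_n^*=0^m, x_{n-1}^*, \ldots, x_1^*$ used in the proof of Theorem~\ref{thm:qd1_mst} remains available from that point on.

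Next, I would invoke Theorem~\ref{thm:qd1_mst_allzeroes_hitting_time} to bound by $O(nm\log(nw_{\max}))$ the expected number of steps until $0^m$ first appears in the map, starting from the arbitrary initial solution. Under the hypothesis $w_{\max} \leq 2^{O(n)}$ this simplifies via $\log(n w_{\max}) = \log n + O(n) = O(n)$, giving an expected hitting time of $O(n^2 m)$ for reaching $0^m$. From that state, Theorem~\ref{thm:qd1_mst} applies and yields another $O(n^2 m)$ expected steps to locate an MST. Adding the two bounds by linearity of expectation (applied to the waiting times of the two successive phases, using the strong Markov property at the first hitting time of $0^m$) gives the claimed $O(n^2 m)$ total.

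There is no real obstacle, only one subtlety to spell out carefully: although Theorem~\ref{thm:qd1_mst} is stated with the precondition that $0^m \in M$ at initialisation, its proof only uses that $0^m$ is present in the map throughout the subsequent run. Because $0^m$ is the unique inhabitant of cell~$n$ and thus never evicted, this precondition is automatically maintained from the moment $0^m$ is first sampled, so the bound transfers to the continuation of a run that started arbitrarily. This is the only nontrivial point; the remainder is the arithmetic simplification of $\log(n w_{\max})$ under the weight bound.
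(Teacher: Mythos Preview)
Your proposal is correct and follows exactly the approach the paper takes: the paper simply states that the corollary follows by combining Theorem~\ref{thm:qd1_mst_allzeroes_hitting_time} and Theorem~\ref{thm:qd1_mst}, with the weight bound $w_{\max}\le 2^{O(n)}$ reducing $\log(nw_{\max})$ to $O(n)$. Your additional justification that $0^m$ is the unique occupant of cell~$n$ (and hence never evicted) makes explicit a detail the paper leaves implicit, but the overall argument is the same.
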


\section{Conclusion}
\label{sec:conclusion}

We performed runtime analyses of a simple quality diversity algorithm termed \QD in the context of pseudo-Boolean optimisation and combinatorial optimisation. 
For the number-of-ones feature space with granularity parameter~$k$ we showed an upper bound on the expected time until all $L$ cells are covered, for different values of $k$. We gave upper bounds on the expected optimisation time on  \ONEMAX, functions of unitation and monotone functions, for which the feature space is aligned favourably with the structure of the problem. We  showed that \QD finds a $(1-1/e)$-approximation for the task of monotone submodular function maximisation, given a single uniform cardinality constraint. Finally, we considered a different feature space spanned by the number of connected components of a connected edge-weighted graph and showed that QD finds a minimum spanning tree in expected time $O(n^2m)$ by imitating Kruskal's well-known MST algorithm, for integer edge weights up to $2^{O(n)}$.
Interestingly, we find that the working principles of \QD{} are similar to those of GSEMO while its implementation is easier and there is no need to use the concept of Pareto-dominance.

We see this paper as a starting point for a deep dive into the theoretical study of QD-algorithms. One important question for future work is to investigate the alignment between the feature space and the problem structure in more depth, to see how the efficiency of QD is affected if there is a mismatch between the two.
Straight-forward extensions of QD might consider more combinatorial optimisation problems, e.g., graph colouring problems, the study of different (multi-)dimensional feature spaces, and supplementary experiments. We also see different directions for methodological/algorithmic extensions, e.g., keeping multiple solutions per cell or combining classic population-based EAs with quality diversity approaches. Another promising research direction is identifying general relationships between QD and GSEMO.

\balance
\bibliographystyle{ACM-Reference-Format}
\bibliography{bib}



\end{document}